\documentclass{article}

\PassOptionsToPackage{numbers,compress}{natbib}
\usepackage[preprint]{neurips_2026}

\usepackage[utf8]{inputenc}
\usepackage[T1]{fontenc}
\usepackage{hyperref}
\usepackage{url}
\usepackage{booktabs}
\usepackage{amsfonts}
\usepackage{nicefrac}
\usepackage{microtype}
\usepackage{xcolor}
\usepackage{subcaption}
\usepackage{multirow}
\usepackage{amsmath}
\usepackage{amssymb}
\usepackage{mathtools}
\usepackage{amsthm}
\usepackage{makecell}
\usepackage{tabularx}
\usepackage{bm}
\usepackage{algorithmic}
\usepackage{algorithm}
\newcommand{\argmax}{\mathop{\rm argmax}\limits}

\usepackage[capitalize,noabbrev]{cleveref}
\usepackage{graphicx}
\theoremstyle{plain}
\newtheorem{theorem}{Theorem}[section]
\newtheorem{proposition}[theorem]{Proposition}
\newtheorem{lemma}[theorem]{Lemma}

\theoremstyle{definition}

\theoremstyle{remark}

\title{An Information Theoretic Framework for Graph Novelty Generation via Latent Mixture Modeling}

\author{
  Itsuki Nakagawa$^{1}$ \quad Kenji Yamanishi$^{1}$ \\
  $^{1}$Graduate School of Information Science and Technology, The University of Tokyo \\
  \texttt{\{itsukin0310, yamanishi\}@g.ecc.u-tokyo.ac.jp}
}

\begin{document}

\maketitle

\begin{abstract}
We propose an information-theoretic framework for graph novelty generation, which aims to generate data that are distinct from existing patterns while preserving global structural consistency.
Our approach embeds data into a latent space, models the latent distribution using finite mixture models, and generates novel samples by imposing explicit novelty and reliability conditions formulated in terms of description length.
Specifically, novelty is enforced by requiring generated samples to be poorly explained by all existing mixture components, while reliability constrains their impact on the overall mixture structure under the Minimum Description Length (MDL) principle.
We provide a theoretical analysis showing that, with appropriate threshold choices, the probabilities of misclassifying non-novel or unreliable samples converge to zero with explicit rates.
Experiments on synthetic and benchmark graph datasets demonstrate that the proposed method enables principled novelty generation with quantifiable risk.
\end{abstract}

\section{Introduction} \label{intro}
\subsection{Motivation}

With the rapid progress of contemporary AI technologies, approaches such as data augmentation, extrapolation, and out-of-distribution (OOD) learning--which aim to handle or generate data not present in the training set--have been extensively studied. In this paper, we focus on a more fundamental yet less explored problem, which we refer to as {\em novelty generation}.

Novelty generation aims to create data that lie beyond the original data distribution while maintaining sufficient reliability. Such a capability is essential for applications that require genuine creativity~(\cite{vc19}) rather than distribution-preserving variation, such as the formation of new communities or the design of new materials. Unlike data augmentation or extrapolation, which preserve or extend existing patterns, novelty generation seeks to introduce data that do not belong to any known patterns. Despite its importance, novelty generation lacks a rigorous mathematical formulation, making it difficult to design principled algorithms with theoretical guarantees.

It is instructive to distinguish novelty generation from related concepts. Data augmentation generates samples along the original data distribution to compensate for data scarcity, and has been widely studied, including in the context of graph data~(see survey papers,
e.g. \cite{zxwzx20,zlnwjs21,yhdx22,dxtl22,azxa23,zjlwlgsj23,ywczs25}). Extrapolation focuses on extending known patterns beyond the observed data, treating unseen samples as natural continuations of existing structures~(e.g. \cite{hnclhx19,bsjs21,lydlxzhw22}). Novelty generation, in contrast, aims to alter the original distribution itself in a controlled manner. Similarly, while OOD learning addresses unseen data, it primarily focuses on detection and recognition~(\cite{hg16}), whereas novelty generation emphasizes the creative synthesis of new data instances.

The purpose of this paper is threefold. First, we propose an information-theoretic framework for graph novelty generation based on latent mixture modeling. Second, we formulate novelty and reliability as description-length-based conditions and introduce MDL-guided sampling algorithms with theoretical guarantees. Third, we empirically demonstrate the effectiveness of the proposed framework using both synthetic and real-world datasets.

\subsection{Significance and Novelty}
Most existing generative approaches share a common methodology; original data are embedded into a latent space, latent representations are manipulated, and new data instances are generated by decoding the manipulated latent points.
The proposed framework follows this general paradigm of latent space manipulation but introduces the following novel and significant ideas and analysis that enable principled novelty generation.

A) {\em Latent Space Modeling with Finite Mixture Models}.
We model the probability distribution of latent representations using a finite mixture model (FMM), such as a von Mises-Fisher mixture. Under this formulation, the original data distribution is represented by multiple mixture components, and novelty is characterized by latent samples that cannot be explained by any existing component.

B) {\em MDL-Guided Sampling}.
Generated novel samples must satisfy two conflicting requirements: they should be sufficiently distant from all existing mixture components ({\em the novelty condition}), while not significantly disturbing the global structure of the mixture model ({\em the reliability condition}). The central contribution of this work is to formulate these conditions in terms of description length under the Minimum Description Length (MDL) principle.
Specifically, we propose an MDL-guided sampling scheme that enforces novelty through a large local increase in description length, while ensuring reliability by bounding the global description-length change of the mixture model. A generated sample is accepted as novelty only if it satisfies both criteria, which are treated as distinct acceptance conditions rather than absolute guarantees.

C) {\em False Novelty and Reliability Probability Analysis}. Our theoretical analysis focuses on controlling  misclassification probabilities. We show that, under appropriate choices of novelty and reliability thresholds, (i) the probability that a sample satisfying the novelty condition is in fact generated from an existing mixture component, and (ii) the probability that a sample satisfying the reliability condition is generated from a distribution that significantly deviates from the original mixture, both vanish to zero with explicit convergence rates.

D) {\em Empirical Demonstration.} We employ synthetic and real data to show that novelty and reliability are more effectively controlled by varying threshold parameters than any competing methods. We thereby confirm that our method enables principled novelty generation with quantifiable risk.

\subsection{Related Work}

Latent space manipulation has become a fundamental technique in generative AI. Data augmentation has been realized through latent interpolation and sampling in GAN-based models~(\cite{liu18,tvsl24,cy21,kzvw20}), while extrapolation has been studied using GANs and encoder-decoder architectures that modify latent distributions or attributes~(\cite{cmkn21,fxzz22,pdkb22}). Latent space manipulation has also played a central role in OOD learning~(\cite{llls18,rrke22}).

In the context of novelty-related generation, several studies have explored knowledge extrapolation or creative generation using regularized encoder-decoder frameworks,  GAN-based methods~(\cite{fxzz22,cmkn21}), diffusion methods (\cite{ho20,dn21,vkswcf23,hsdfl22}). While these approaches demonstrate empirical success, they do not provide a principled formulation of novelty generation with explicit guarantees.

Latent space modeling using finite mixture models has been investigated in various generative and representation learning settings, including deep clustering~(\cite{czccc17,yclf19}), graph augmentation~(\cite{yanjin}),
semi-supervised learning~(\cite{hzhq20}), and self-supervised learning with von Mises-Fisher mixtures~(\cite{vmfmixture}). These works primarily focus on modeling or augmenting existing data distributions, rather than generating
novelty beyond all mixture components.

Graph community augmentation has been argued in \cite{icdm24} under the reliability and novelty conditions.
It differs from our work in that they used distance-based formulations of the conditions rather than description-length based ones; they considered Gaussian mixtures rather than vMF mixtures, and they didn't give any theoretical analysis of misclassification.

The MDL principle has been extensively applied to statistical modeling and model selection~(\cite{rissanen78,gwd,mdlbook}), including the selection of latent variables via criteria such as DNML~(\cite{dnml}). While MDL has recently been applied to data augmentation~(\cite{yanjin}), no existing work has formulated novelty generation as an MDL-guided sampling problem in latent mixture models.

To our knowledge, this paper is the first to propose a framework for graph novelty generation that combines finite mixture modeling of latent space with MDL-guided sampling, providing both theoretical guarantees and empirical validation on the controllability of novelty and reliability.
To ensure reproducibility, implementation details are provided in Appendix F.

\section{Overall Flow of Novelty Generation }

\begin{figure}[t!]
\centering
 \vskip -0.2in
    \centering
    \includegraphics[width=7.2cm]
    {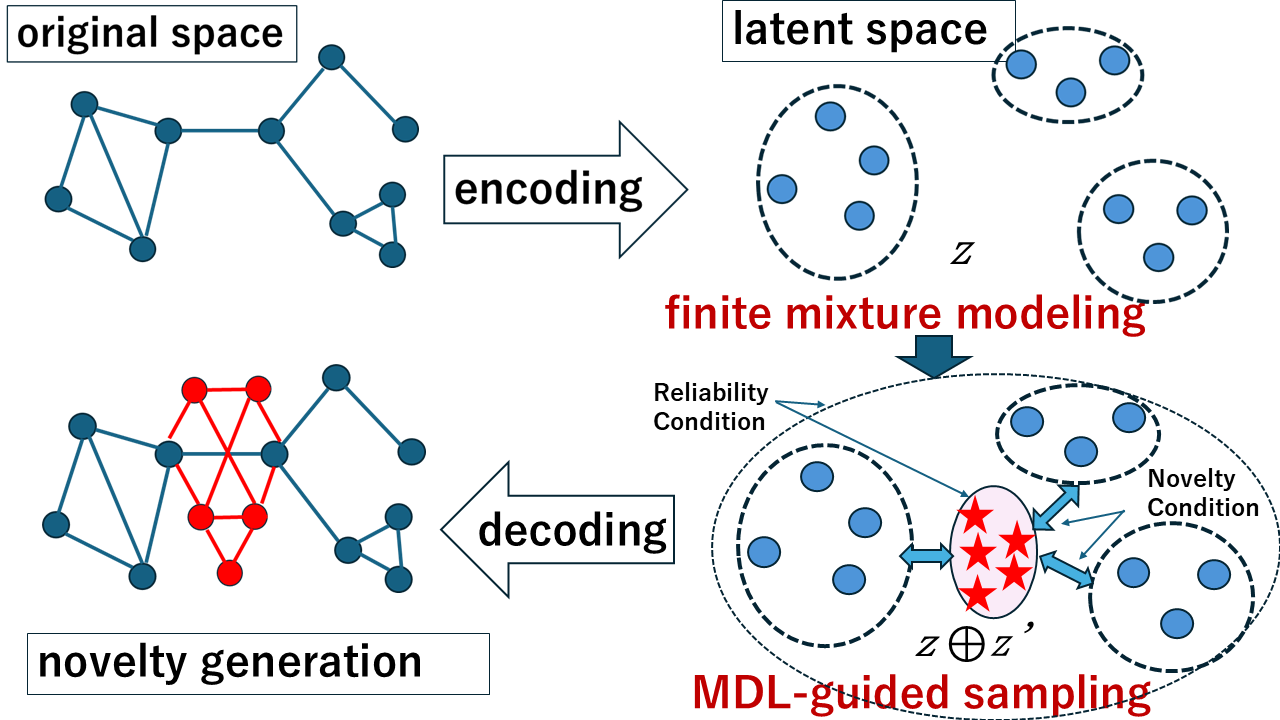}
   \caption{The flow of our framework. Red stars show sampled points decoded to a novel graph.
   }
   \label{flow}
   \vskip -0.1in
\end{figure}

While the proposed framework is formulated at a general level and is applicable to a broad class of generative models,
we focus in this paper on graph-structured data as a concrete
 instantiation.
In particular, this section describes the overall flow of novelty generation in the context of community-level graph generation,
which serves as a testbed for our framework.
The overall flow consists of the following steps: 1) encoding, 2) latent mixture modeling, 3) MDL-guided sampling, and 4) decoding (see Figure \ref{flow}).

1) {\em Encoding step}:
First, we give an encoding and decoding framework.
For a binary graph $G=(V,E )$\ ($V$: a set of vertices, $E$: a set of edges),  let ${\mathcal A}=(y_{ij})$ be its adjacency matrix for $G$.
Let
${\mathcal Z}$ be the latent space
and a map $\psi: V \rightarrow {\mathcal Z}$ be an embedding.
The $i$-th node in $V$ is encoded to a latent state vector $z_{i}\in {\mathcal Z}$.
$\psi$ may be modeled using a graph auto encoder~(GAE).
We define the reconstruction  loss by
\begin{align}\label{deconstruction}
&{\mathcal L}_{{\rm rec}}=\sum _{(i,j): y_{ij}=1}\left(-\log \sigma \left(\tau z_{i}^{\top}z_{j}\right)\right)+\sum _{(i,j): y_{ij}=0}\left(-\log \left(1-\sigma \left({\tau z_{i}^{\top}z_{j}}\right)\right)\right),
\end{align}
where $\sigma$ is a sigmoid function: $\sigma (x)=1/(1+\exp (-x))$ and $\tau>0$ is a temperature parameter to ensure sufficient dynamic range in the edge probabilities.
The embedding parameters are optimized using negative sampling and gradient descent to minimize the reconstruction loss.

2) {\em Latent mixture modeling step}: Once a graph is encoded into the latent space, we model it with a {\em finite mixture model} (FMM), where each cluster in FMM corresponds to a graph community.

3) {\em MDL-guided sampling step}:
We make sampling of latent vectors so that the sampled points form a group that does not change the overall FMM significantly ({\em reliability condition}), while it is far from all existing mixture components in FMM ({\em novelty condition}).
These properties are specified in terms of description length, under the  MDL principle.
MDL is used as an information-theoretic criterion to define acceptance conditions for novelty and reliability, not as an optimization objective.

4) {\em Decoding step}: We reconstruct the existing graph and generate novel structures via deterministic decoding with a density-preserving threshold.

\section{Latent Mixture Modeling}

This section yields the details of vMF-mixtures.
We decompose the latent variable into
$z=(r, \phi )$ where $r \in {\mathbb R}$ shows the radius norm, and  $\phi \in {\mathbb S}^{d-1}$ shows the angle.
Let $\mu$ with $||\mu ||=1$ and $\lambda \in (0, \infty)$ be
parameters.
A probability density function of $\phi$,
denoted as
$p(\phi; \mu, \lambda)$,
is modeled by {\em von Mises-Fisher~(vMF) distribution}:
\[p(\phi; \mu, \lambda)=\frac{\exp (\lambda \mu^{\top}{\phi})}{C(\lambda)},\ \ \ C(\lambda )=\frac{(2\pi)^{d/2}I_{d/2-1}(\lambda )}{\lambda ^{d/2-1}},\]
where $I_{d/2-1}$ is the modified Bessel function of the first kind of order  $d/2-1$.
$\mu$ and $\lambda$ can be thought of as a mean vector and a variance in directional statistics.
A probability density function of $r$, denoted as $p(r; \xi)$ is modeled by a 1-dimensional Gaussian distribution with parameter $\xi$(mean and variance).
We let $p(z; \xi, \mu, \lambda)=p(r;\xi)p(\phi ; \mu, \lambda)$.

Introducing a latent variable $w$ indicating a cluster index, for a positive integer $k$~(mixture size), we consider a {\em vMF-mixture}~(\cite{vmfmixture}) over the latent space, for which the complete variable model is given as follows:
\begin{eqnarray*}
p({z},w;\theta )&=&\prod _{j=1}^{k}(\pi _{j}p(r; \xi_{j})p(\phi ; \mu_{j}, \lambda _{j}))^{\delta (w=j)}\
 \end{eqnarray*}
 where $\pi_{j}\geq 0$, $\sum _{j=1}^{k}\pi _{j}=1$, and $\delta$ is the delta function. We set $\theta =(\{\pi _{j}, \xi_{j}, \mu _{j}, \lambda _{j}\})$.

For a given independent data sequence ${{\bm z}}={z_{1}},\dots ,{z_{n}}$,
the EM~(Expectation and Maximization) algorithm outputs the following statistics and the maximum likelihood estimates (MLE) of parameters iteratively~(\cite{vmfmixture}):
\begin{align*}
& p(j|{z_{i}}; \xi_{j}, \mu _{j},\lambda _{j})=\frac{\pi _{j}p({z_{i}}; \xi_{j}, \mu _{j}, \lambda _{j})}{\sum _{\ell =1}^{k}\pi _{\ell}p({z_{i}}; \xi_{\ell}.\mu _{\ell}, \lambda _{\ell})},\ \ w_{i}=\argmax _{j}p(j|{z_{i}}; \xi_{j}, \mu _{j}, \lambda _{j}),\\
& \pi _{j}=\frac{1}{n}\sum _{i=1}^{n}p(j|{z_{i}}; \xi_{j},\mu _{j}, \lambda_{j}),\ \
\gamma_{j}= \sum _{i=1}^{n}{\phi_{i}}p(j|{z_{i}}; \xi_{j},\mu _{j}, \lambda _{j}),\\
& {\mu}_{j}=\frac{\gamma_{j}}{||\gamma_{j}||},\ \ \
{R}_{j}=\frac{||\gamma_{j}||}{\sum ^{n}_{i=1}p(j|{z_{i}}; \xi_{j},\mu _{j},\lambda _{j})},\ \ \
{\lambda}_{j}=\frac{R_{j}(d-R_{j}^{2})}{1-R_{j}^{2}}.
\end{align*}
The Gaussian parameter $\xi$ is also estimated using the EM algorithm in a standard way.

\section{MDL-Guided Sampling}

\subsection{Preliminary 1: NML Code-length}

In the literature of the {\em MDL principle}~(\cite{optimal,gwd,mdlbook}),
the information included in a data sequence is measured in terms of its {\em description length}, which is formally defined as
the {\em normalized maximum likelihood (NML) code-length}.
This is  defined as the logarithmic loss for the data relative to the NML distribution.
The description length is used as a criterion to define acceptance conditions for novelty and reliability defined later.

In general, for a class of parametric probability distributions ${\mathcal C}=\{p(z; \theta )\}$, the NML code-length ${\mathcal L}_{_{\rm NML}}({\bm z})$ for a given data sequence ${\bm z}=z_{1},\dots ,z_{n}$ is calculated as
\begin{equation}\label{nml}
{\mathcal L}_{_{\rm NML}}({\bm z})\buildrel \rm def \over =-\log \max_{\theta}p({\bm z}; \theta)+\log C_{n},\ \ \
C_{n}\buildrel \rm def \over =\int \max_{\theta}p({\bm z}; \theta)d{\bm z},
\end{equation}
where we call $\log C_{n}$ the {\em parametric complexity} of ${\mathcal C}$. It depends on sample size and the class only.

We derive a formula of parametric complexity for vMF distribution.

\begin{proposition}\label{prop1}
The parametric complexity
for the vMF distribution
satisfies
\begin{align}\label{vmfnml}
&\log C_{n}
\leq
\frac{d}{2}\log \frac{n}{2\pi}+
\log \frac{\pi ^{\frac{d+1}{2}}(d+1)\Gamma (\frac{d-1}{2}) }{\sqrt{d}(d-1)^{\frac{d-1}{2}}\Gamma (\frac{d}{2})^2},
\end{align}
The full proof is given in Appendix \ref{proofprop}.
\end{proposition}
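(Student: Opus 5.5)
The natural route is the standard asymptotic (Rissanen-type) evaluation of the NML normalizer for an exponential family, in its upper-bound form:
\[
\log C_n \le \frac{p}{2}\log\frac{n}{2\pi}+\log\int_{\Theta}\sqrt{\det I(\theta)}\,d\theta+o(1),
\]
where $p$ is the parameter dimension and $I(\theta)$ the Fisher information. For the vMF family the parameter $\theta=\lambda\mu$ lives in $\mathbb{R}^d$, so $p=d$. This explains the leading term $\frac{d}{2}\log\frac{n}{2\pi}$ in (\ref{vmfnml}). What remains is to bound the Fisher-information volume by the explicit constant
\[
\frac{\pi^{\frac{d+1}{2}}(d+1)\Gamma(\frac{d-1}{2})}{\sqrt{d}\,(d-1)^{\frac{d-1}{2}}\Gamma(\frac{d}{2})^2}.
\]
Because the vMF family is exponential with sufficient statistic $\phi$, and $\sum_i\phi_i$ determines the MLE $\hat\mu=\gamma/\|\gamma\|$ and $\hat\lambda$ via $A_d(\hat\lambda)=\|\gamma\|/n$, I would use the g-function decomposition: $C_n=\int g(\hat\theta)\,d\hat\theta$ with $g$ the density of the MLE evaluated at its own parameter. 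A local CLT or Laplace argument then bounds $g(\hat\theta)\le (n/2\pi)^{d/2}\sqrt{\det I(\hat\theta)}\,(1+o(1))$.

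The Fisher information in polar coordinates $(\lambda,\mu)$ splits into a radial part and a tangential part.

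\textbf{Radial part.} This is $A_d'(\lambda)$, where $A_d(\lambda)=I_{d/2}(\lambda)/I_{d/2-1}(\lambda)=\mathbb{E}[\mu^\top\phi]$.

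\textbf{Tangential part.} There are $d-1$ directions, each contributing $\lambda A_d(\lambda)$ in the parametrization $\theta=\lambda\mu$.

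Hence $\sqrt{\det I}=\sqrt{A_d'(\lambda)}\,(\lambda A_d(\lambda))^{(d-1)/2}$. The volume element is $\lambda^{d-1}d\lambda\,d\sigma(\mu)$ in $\theta$-coordinates, or equivalently one can work directly in $(\lambda,\mu)$ with the tangential information $\lambda^2\cdot A_d/\lambda$, which gives the same result. Integrating $\mu$ over the sphere produces the surface area $2\pi^{d/2}/\Gamma(d/2)$, which accounts for one $\Gamma(d/2)$ in the denominator. I would then substitute $t=A_d(\lambda)\in[0,1)$ to bring the radial integral to a Beta-type form. The factors $(d-1)^{\frac{d-1}{2}}$ and $\sqrt d$ suggest two specific estimates:
\begin{itemize}
\item the Amos-type bound $\lambda\le t(d-1)/(1-t^2)$, which is consistent with the paper's approximation $\lambda\approx R(d-R^2)/(1-R^2)$;
\item the small-$\lambda$ slope $A_d'(0)=1/d$.
\end{itemize}
These should reduce the radial integral to something like $\int_0^1 t^{(d-1)/2}(1-t^2)^{-?}\,dt$, expressed through Gamma functions. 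The combination $\Gamma(\frac{d-1}{2})/\Gamma(\frac d2)$ together with $\sqrt\pi$ is exactly $B(\frac12,\frac{d-1}{2})$, which supports this guess.

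\textbf{Main obstacle.} The main difficulty is that the Fisher-information volume diverges as $\lambda\to\infty$, i.e.\ as $t\to1$: the tangential factor grows like $\lambda^{(d-1)/2}$ while $A_d'\sim1/(2\lambda^2)$. Without a restriction the NML normalizer is infinite. I expect to have to restrict the parameter range, or the data range, in the normalization, with a luckiness bound or a cap on $\lambda$. Alternatively, one can bound $g$ more crudely near $t\to1$ using the exact MLE density on the sphere, exploiting that $\|\sum\phi_i\|\le n$.

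My first attempt would be to bound $\lambda A_d(\lambda)$ and $A_d'(\lambda)$ by rational functions of $t$ using the Amos inequalities. I would then choose the cut so that the resulting integral yields the factor $(d+1)/(d-1)^{(d-1)/2}$, and absorb remaining constants into the $\pi^{(d+1)/2}$ term. Checking that the constants match exactly is where most of the routine but delicate work lies.
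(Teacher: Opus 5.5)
Your proposal never actually produces the bound. Everything rests on the last step, ``choose the cut so that the resulting integral yields the factor $(d+1)/(d-1)^{\frac{d-1}{2}}$,'' which is reverse-engineering and cannot succeed. With your density $\sqrt{A_d'(\lambda)}\,(\lambda A_d(\lambda))^{\frac{d-1}{2}}$ against $d\lambda\,d\sigma(\mu)$, the integrand is of order $\lambda^{\frac{d-3}{2}}$ for large $\lambda$. A cap $\lambda\le\Lambda$ therefore leaves a Jeffreys volume of order $\Lambda^{\frac{d-1}{2}}$, and no choice of cap gives the $\Lambda$-free constant in (\ref{vmfnml}). Your auxiliary estimates would not help. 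The bound $\lambda\le(d-1)t/(1-t^2)$ with $t=A_d(\lambda)$ goes the wrong way: from $A_d'=1-A_d^2-(d-1)A_d/\lambda>0$ one gets $\lambda>(d-1)t/(1-t^2)$. The $o(1)$ in your starting inequality would also still have to be removed. The obstacle you flagged is in fact real. As $R=\|\sum_i\phi_i\|/n\to1$, the maximized vMF likelihood is of order $(1-R)^{-n(d-1)/2}$, while for $n\ge2$ the set $\{1-R\le s\}$ has volume of order $s^{(n-1)(d-1)/2}$. The normalizing integral therefore behaves like $\int_0 s^{-\frac{d+1}{2}}\,ds$, so $C_n=\infty$ for the unrestricted family.

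The paper obtains its constant only because it integrates a different function. Its steps are:
\begin{itemize}
\item It takes (\ref{fisher}), $|I(\mu,\lambda)|^{1/2}=(A_d/\lambda)^{\frac{d-1}{2}}\sqrt{1-(d-1)A_d/\lambda-A_d^2}$, and integrates it against $d\mu\,d\lambda$.
\item It uses Amos's upper bound on $A_d$ in the first factor and his lower bound in the second. The second factor then collapses exactly to $\sqrt d/(a+\sqrt{\lambda^2+(a+1)^2})$ with $a=\frac{d-1}{2}$.
\item It bounds the integrand by $\sqrt d\,(a+\sqrt{\lambda^2+a^2})^{-a-1}$.
\item It evaluates that integral via $\lambda=at$, $t=\sinh u$, $v=\tanh(u/2)$, $w=v^2$, obtaining $\frac{(2a)^{-a}}{2}\bigl[B(\frac32,a)+B(\frac12,a)\bigr]$.
\end{itemize}
Thus $(d-1)^{-\frac{d-1}{2}}$ is $(2a)^{-a}$, and $(d+1)/\sqrt d$ comes from $\sqrt d\,\bigl(\frac{1}{2\Gamma(\frac d2+1)}+\frac{1}{\Gamma(\frac d2)}\bigr)=\frac{d+1}{\sqrt d\,\Gamma(\frac d2)}$. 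None of it comes from a parameter cut, the substitution $t=A_d$, or $A_d'(0)=1/d$.

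However, (\ref{fisher}) is $\sqrt{\det\mathrm{Cov}(\phi)}$, the Fisher determinant in the natural parameter $\theta=\lambda\mu$. Integrating it against $d\mu\,d\lambda$ drops the Jacobian $\lambda^{d-1}$ of $d\theta=\lambda^{d-1}d\lambda\,d\sigma(\mu)$. Your tangential information $\lambda A_d$ (rather than $A_d/\lambda$) is the correct one in $(\lambda,\mu)$ coordinates. The two integrands differ by exactly $\lambda^{d-1}$, which is what turns the paper's integrable $O(\lambda^{-\frac{d+1}{2}})$ tail into your divergent one. So you cannot close the gap with a correct Jeffreys-volume computation. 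You would have to adopt (\ref{fisher}) with the measure $d\mu\,d\lambda$ as the paper does, or restrict $\lambda$ (or the data) and accept a range-dependent term in place of the stated constant.
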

We also give a formula of the NML code-length relative to the GMM in Appendix \ref{gaussiannml}.
\subsection{Preliminary 2: DNML Code-length for FMMs}

Once the mixture model
is obtained, we may calculate the total code-length for the data sequence relative to the complete variable model.
Let ${\mathcal W}=\{1,\dots ,k\}$ be a set of cluster indexes, which is a latent variable.
For an observed independent sequence ${\bm z}=z_{1},\dots , z_{n}$, let the corresponding cluster sequence be ${\bm w}=w_{1},\dots ,w_{n}$,
the {\em decomposed normalized maximum likelihood (DNML) code-length}~(\cite{dnml}) for ${\bm z}$ and ${\bm w}$ is given by the sum of the NML code-length for ${\bm z}$ given ${\bm w}$ and that for ${\bm w}$.
That is,
\begin{equation}\label{dnml}
{\mathcal L}_{_{\rm DNML}}({\bm z},{\bm w};k)
=\sum ^{k}_{j=1}{\mathcal L}_{_{\rm NML}}({\bm z}_{j})+{\mathcal L}_{_{\rm NML}}({\bm w}),
\end{equation}
where ${\bm z}_{j}$ is the subsequence of ${\bm z}$ such that $w=j$.
${\mathcal L}_{_{\rm NML}}({\bm w})$ is the NML code-length for ${\bm w}$ with respect to the multinoulli distribution; $\{p(w=i)=\pi _{i}, \sum_{i=1}^{k}\pi _{i}=1,\ \forall \  i,  \pi _{i}\geq 0\}.$
Letting $H$ be the entropy function, this is calculated as
\begin{eqnarray*}
& &{\mathcal L}_{_{\rm NML}}({\bm w})=nH\left(\frac{n_{1}}{n}\cdots \frac{n_k}{n}\right)+\log C_{n}(k),
\end{eqnarray*}
where $n_{j}=|{\bm z}_{j}|\ (j=1, \dots ,k),$ and $C_{n}(k)$ is a parametric complexity for the multinoulli distribution.
According to \cite{km07}, $C_{n}(k)$ can be computed in time $O(n+k)$ using the following relation:
$C_{n}(k+2)=C_{n}(k+1)+({n}/{k})C_{n}(k)$.
The optimal $k$ can be determined by minimizing (\ref{dnml}) or using ground-truth labels. We use this $k$ for FMM hereafter.

\subsection{Conditions for MDL-guided sampling}

Let ${\bm z}$ be normal examples in the latent space, modeled by an FMM.
Let ${\bm z}'$ be generated data in the latent space.
In order to ensure that ${\bm z}'$ is novel with some reliability, we require that ${\bm z}'$ is distinct from any component in the FMM
({\em novelty condition}),
while we require that ${\bm z}$ reasonably fit within the overall FMM in distribution ({\em reliability condition}).
Below we formalize these conditions in terms of description lengths (see Figure \ref{flow}). We do not optimize description length directly.
Instead, it is used as an information-theoretic criterion to define acceptance conditions for novelty and reliability.

Let ${\bm z}_{i}$ be a subsequence of ${\bm z}$  whose datum is categorized into the $i$-th component of the FMM via e.g. EM algorithm.
The novelty condition is formulated in terms of the NML code-length for ${\bm z}'$ relative to a non-mixture model.\\
\ \ \\
{\bf Novelty condition}: For a parameter $\epsilon _{1}\in {\mathbb R}$,
the generated data set ${\bm z}'$ should satisfy:
\begin{align}\label{noveltycond}
& \forall i\in \{1, \dots , k\}, \ \ {\mathcal L}_{_{\rm NML}}({\bm z}_{i}\oplus {\bm  z}')-
\{{\mathcal L}_{_{\rm NML}}({\bm z}_{i} )+{\mathcal L}_{_{\rm NML}}( {\bm z}' )\}> |{\bm z}_{i}\oplus {\bm z}'|\epsilon _{1},
\end{align}
where ${\bm z}_{i}\oplus {\bm  z}'$ means a sequence obtained by concatenating ${\bm z}'$ and ${\bm z}_{i}$.
We call the left-hand side of (\ref{noveltycond})  {\em novelty score}.
Eq.~(\ref{noveltycond}) shows that for any $i$, the NML code-length required for encoding ${\bm z}_{i}$ and $ {\bm z}'$ with distinct models separately  can significantly reduce that required for  encoding ${\bm z}_{i}\oplus {\bm z}'$ with a single model simultaneously.
Intuitively, this implies that for each $i$, ${\bm z}'$ is significantly distinct from ${\bm z}_{i}$ in distribution.

Meanwhile,
letting ${\bm w}$ denote the latent variable indicating the cluster corresponding to ${\bm z}$, the reliability condition is specified in terms of the DNML code-length for ${\bm z}\oplus {\bm z}'$ relative to the FMM.
\\
\ \ \\
{\bf Reliability condition:}
For a parameter $\epsilon_{2}\in {\mathbb R}$, the generated data set ${\bm z}'$ should satisfy
\begin{align}\label{reliabilitycond}
&{\mathcal L}_{_{\rm DNML}}({\bm z}\oplus {\bm z}', {\bm w}\oplus{\bm  w}')-
\{{\mathcal L}_{_{\rm DNML}}({\bm z}, {\bm w})+{\mathcal L}_{_{\rm DNML}}( {\bm z}', {\bm w}')\}\leq (|{\bm z}\oplus {\bm z}'|+|{\bm  w}\oplus {\bm w}'|)\epsilon _{2}.
\end{align}
We call the left-hand side of (\ref{reliabilitycond})  {\em reliability  score}.
Note that ${\bm w}'$ is a single component different from any of ${\bm w}$.
Eq. (\ref{reliabilitycond}) shows that the DNML code-length required for separately encoding $({\bm z},{\bm w})$ and $({\bm z}', {\bm w}')$ with distinct models does not significantly reduce that required for simultaneously encoding $({\bm z}\oplus {\bm z}', {\bm w}\oplus {\bm w}')$ with a single model.
Intuitively, this implies that the distribution of $({\bm z}, {\bm w})$ does not change so much even if $({\bm z}', {\bm w}')$ is incorporated.

{\em MDL-guided sampling} refers to sampling of data in the latent space under the novelty and reliability conditions.
This is basically a rejection sampling where ${\bm z}'$ is sampled according to a proposal distribution, then accept ${\bm z}'$ if our conditions are satisfied, otherwise reject it.

We design the proposal distribution $q_{g}$ as follows:
As for angle component of vMF mixtures with clusters with $\{(\lambda _{j},\mu _{j})\} _{j=1,\dots ,k}$,
we randomly choose a form of  $\tilde{\mu}=\sum _{j}\alpha _{j}\mu _{j}+\epsilon \ (\epsilon \sim {\mathcal N}(0, (\sigma)^{2}{ I})$ from the convex hull of  means of mixture components, and set
$\bar{\mu}=\tilde{\mu}/||\tilde{\mu}||$ and $\bar{\lambda}=\sum _{j}\alpha _{j}\lambda _{j}$.
The proposal distribution  is the vMF with $(\bar{\lambda}, \bar{\mu})$.
The radius component is given in the same manner.

\vskip -0.02in
\begin{algorithm}[!t]

  \caption{Novelty Generation Algorithm (NVG)}
  \label{alg:nvg}

  \begin{algorithmic}\label{nvg-alg}
    \STATE {\bfseries Input:} Graph: $G=(V,E)$,  parameters: $\epsilon_{1}, \epsilon _{2}$, $n'$: number of sample points, $m:$ sample size
    \STATE {\bfseries 1:} Encode $G$ into the latent space ${\mathcal Z}$ to produce ${\bm z}$ as a latent representation of $G$.
    \STATE {\bfseries 2:} Estimate  a finite mixture model of  ${\bm z}$ to produce estimates of parameters, a sequence of latent variables ${\bm w}$ and
${\bm z}_{i}\ (i=1,\dots ,k)$ where ${\bm z}_{i}$ is data
belonging to the $i$th component ($w=i$).
\STATE Initialize sample$=0$
   \REPEAT

    \STATE{\bfseries 3:} Generate ${\bm z}'$ by random sampling of $n'$ points according to the proposal distribution $q_{g}({\bm z}')$,
     and let $w'$ be a new cluster index vector corresponding to ${\bm z}'.$
\IF{The novelty condition (\ref{noveltycond}) and reliability condition (\ref{reliabilitycond}) are fulfilled}
\STATE {\bfseries 4:} Decode ${\bm z}\oplus {\bm z}'$ to produce a novel graph.
    \STATE sample= sample$+1$
    \ENDIF
    \UNTIL{ sample $=m$}.
\STATE {\bfseries Output:} $m$ graphs decoded from $\{{\bm z}\oplus {\bm z}'\}$.
  \end{algorithmic}
\end{algorithm}

We propose a {\em  novelty generation algorithm  based on MDL-guided sampling} (NVG) as in Algorithm \ref{nvg-alg}.
The time complexity for the encoding step is $O(I_{1}|E|)$, while the estimation step is $O(I_{2}(n+n')kf(d))$ where
$f(d)=d$ for vMF mixtures and $f(d)=d^{2}$ for GMM. $I_{1}$ and $I_{2}$ are the number of iterations for GAE and FMM estimation, respectively. The time complexity for  parametric complexity for NML or DNML code-length is $O(n+n'+k)$.
The time complexity for MDL-guided sampling is $O(m/p)$ where $p$ is the probability that a candidate sample satisfies the acceptance conditions.
This $p$ scales asymptotically as $O(\epsilon_2^{d/2})$.
There is a near-linear tradeoff between $\epsilon _1$ and $\epsilon_2$; thus, increasing $\epsilon_1$ leads to a decrease in $p$ on the same order.
Although this rate suffers from the curse of dimensionality, in practice, we mitigate this by using a low latent dimension ($d\sim 8$) (see Sec.\ref{sec-exp}) in our experiments.
Higher dimensional cases are left for future study.

\section{Theoretical Analysis}\label{theory}

Generated samples must satisfy novelty and reliability, which are potentially conflicting criteria.
We analyze the misclassification probabilities associated with each condition and show that, under suitable threshold choices, both probabilities converge to zero with explicit rates.
We analyze novelty and reliability separately, without claiming their simultaneous satisfaction.

Let the components of the FMM be $\{{\bm z}_{i}\}_{i=1,\dots ,k}$ and the distribution for ${\bm z}_{i}$ be $p_{\theta_{i}}(z)$\ $(i=1,\dots ,k)$.
We define ${\mathcal E}_{1}$ as the event that for some $i$,
${\bm z}_{i}$ and ${\bm z}'$ satisfying the novelty condition (\ref{noveltycond}) are identically distributed according to $p_{\theta _{i}}(z)$.
Theorem \ref{thm-novelty} gives an upper bound on the probability that  ${\mathcal E}_{1}$ occurs, which means the probability of false novelty  for  MDL-guided sampling.

\begin{theorem}\label{thm-novelty}{\rm (Probability of False Novelty)}
Let $n_{i}$ be the size of ${\bm z}_i\ (i=1,\dots , k)$ and $n'$ be the size of ${\bm z}'$. Let $C_{n_{i}+n'
}$ be the parametric complexity as in (\ref{nml}).
The probability that ${\mathcal E}_{1}$ occurs
 is upper-bounded as follows:
\vspace{-0.1in}
\begin{equation}
{\rm Prob}[{\mathcal E}_{1}]\leq
\sum _{i=1}^{k}\exp \left( -(n_{i}+n')\epsilon _{1} +\log C_{n_{i}+n'}\right).
\end{equation}
\end{theorem}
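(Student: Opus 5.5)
We reduce the statement to a per-component estimate and then combine components with a union bound,
\[
{\rm Prob}[{\mathcal E}_{1}]\leq \sum_{i=1}^{k}{\rm Prob}[{\mathcal E}_{1}^{(i)}],
\]
where ${\mathcal E}_{1}^{(i)}$ is the event that the concatenation ${\bm x}={\bm z}_{i}\oplus{\bm z}'$ of length $N=n_{i}+n'$ is i.i.d.\ from $p_{\theta_{i}}$ and yet satisfies the $i$-th inequality in (\ref{noveltycond}). It therefore suffices to show ${\rm Prob}[{\mathcal E}_{1}^{(i)}]\leq \exp(-N\epsilon_{1}+\log C_{N})$ for each $i$.

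First, lower-bound the separate code-lengths by the likelihood under the true parameter. By the definition (\ref{nml}), ${\mathcal L}_{_{\rm NML}}({\bm z}_{i})=-\log\max_{\theta}p({\bm z}_{i};\theta)+\log C_{n_{i}}\geq -\log p({\bm z}_{i};\theta_{i})+\log C_{n_{i}}$, and the analogous bound holds for ${\bm z}'$. The parametric complexities $\log C_{n_{i}},\log C_{n'}$ are nonnegative for the classes considered, since the maximized likelihood dominates a normalized density. Consequently,
\[
{\mathcal L}_{_{\rm NML}}({\bm z}_{i})+{\mathcal L}_{_{\rm NML}}({\bm z}')\geq -\log p({\bm x};\theta_{i}),
\]
using independence to factor $p({\bm x};\theta_{i})=p({\bm z}_{i};\theta_{i})p({\bm z}';\theta_{i})$. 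The event ${\mathcal E}_{1}^{(i)}$ then implies ${\mathcal L}_{_{\rm NML}}({\bm x})+\log p({\bm x};\theta_{i})>N\epsilon_{1}$, which can be rewritten as
\[
p({\bm x};\theta_{i})<e^{-N\epsilon_{1}}\,p_{\rm NML}({\bm x}),\qquad p_{\rm NML}({\bm x})=\max_{\theta}p({\bm x};\theta)/C_{N}.
\]

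Next, bound the probability of this set $A$ under $p_{\theta_{i}}$ by a change-of-measure argument. Directly,
\[
{\rm Prob}[A]=\int_{A}p({\bm x};\theta_{i})\,d{\bm x}\leq e^{-N\epsilon_{1}}\int p_{\rm NML}({\bm x})\,d{\bm x}=e^{-N\epsilon_{1}},
\]
which is already at least as strong as the claimed bound, since $\log C_{N}\geq 0$. Alternatively, if one does not use the $\log C_{n_{i}}$ and $\log C_{n'}$ terms, working with $\max_{\theta}p$ directly gives $\int\max_{\theta}p({\bm x};\theta)\,d{\bm x}=C_{N}$, and hence the stated form $\exp(-N\epsilon_{1}+\log C_{N})$. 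I would present this second version to match the statement, since it needs no sign assumption on the smaller complexities. Summing over $i$ completes the proof.

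The main obstacle is the first step, namely justifying that the separate code-lengths are at least the true joint log-loss. This step needs the maximized likelihood to dominate the likelihood under $\theta_{i}$; the paper's model is stated loosely, so I would make explicit that $\theta_{i}$ lies in the model class. It also requires interpreting ``identically distributed'' as i.i.d.\ sampling from $p_{\theta_{i}}$. Finiteness of $C_{N}$, for instance via Proposition \ref{prop1} for the vMF case, is needed for the bound to be meaningful.
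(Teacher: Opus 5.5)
There is a genuine gap, and it sits in the step you flagged as the main obstacle. The inequality there is backwards. Since $\max_{\theta}p({\bm z}_i;\theta)\geq p({\bm z}_i;\theta_i)$, taking $-\log$ reverses it:
\[
{\mathcal L}_{_{\rm NML}}({\bm z}_i)\leq -\log p({\bm z}_i;\theta_i)+\log C_{n_i}.
\]
So the separate NML code-lengths are \emph{upper}-bounded by the true log-loss plus complexity. They are not lower-bounded pointwise. An upper bound on ${\mathcal L}_{_{\rm NML}}({\bm z}_i)+{\mathcal L}_{_{\rm NML}}({\bm z}')$ cannot be chained with the novelty condition, because that condition only says ${\mathcal L}_{_{\rm NML}}({\bm x})$ exceeds this sum by $N\epsilon_1$.

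Your ``rewriting'' step is also reversed. The inequality ${\mathcal L}_{_{\rm NML}}({\bm x})+\log p({\bm x};\theta_i)>N\epsilon_1$ means $\log\bigl(p({\bm x};\theta_i)/p_{\rm NML}({\bm x})\bigr)>N\epsilon_1$, i.e.\ $p({\bm x};\theta_i)>e^{N\epsilon_1}p_{\rm NML}({\bm x})$. That is a lower bound on the true density, which is useless for bounding $\int_A p({\bm x};\theta_i)\,d{\bm x}$. The two sign errors cancel in direction, which is why you landed on a bound $e^{-N\epsilon_1}$ stronger than the one claimed. Your ``alternative'' version rests on the same chain, so it does not repair the argument. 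The factor $C_N$ in the statement is not an artifact of discarding $\log C_{n_i}$ and $\log C_{n'}$.

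More fundamentally, the joint NML density $p_{\rm NML}({\bm x})$ is the wrong dominating measure. The novelty event is exactly where ${\mathcal L}_{_{\rm NML}}({\bm x})$ is large, i.e.\ where $p_{\rm NML}({\bm x})$ is small. The paper applies max-likelihood domination to the \emph{joint} sequence instead: ${\mathcal L}_{_{\rm NML}}({\bm x})\leq -\log p({\bm x};\theta_i)+\log C_N$. Combined with the novelty condition, this gives on the event
\[
p({\bm x};\theta_i)< e^{-{\mathcal L}_{_{\rm NML}}({\bm z}_i)}\,e^{-{\mathcal L}_{_{\rm NML}}({\bm z}')}\,e^{-N\epsilon_1+\log C_N}.
\]
Next, integrate over ${\bm z}_i$ and ${\bm z}'$ separately, using Kraft's inequality for each of the two NML densities (their product is a sub-probability density on the product space). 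This yields $e^{-N\epsilon_1+\log C_N}$, and the union bound over $i$ finishes as you planned. So $C_N$ enters through domination on the joint sequence, and the change of measure is to the product $p_{\rm NML}({\bm z}_i)\,p_{\rm NML}({\bm z}')$ of the separate NML distributions, not to the joint one.
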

{\em Proof sketch.} Let the probability density that ${\bm z}_{i}$ and ${\bm z}'$ occur according to the same distribution regardless the novelty condition holds be $p_{i}({\bm z}_{i}, {\bm z}')$.
We can show that this is upper-bounded as:
\begin{align*}
p_{i}({\bm z}_{i}, {\bm z}')
\leq e^{-{\mathcal L}_{_{\rm NML}}({\bm z}_{i})}e^{-{\mathcal L}_{_{\rm NML}}({\bm z}')}
e^{-(n_{i}+n')\epsilon _{1}+\log C_{n_{i}+n'}},
\end{align*}
Therefore, letting the integrals be taken over the domain satisfying (\ref{noveltycond}), we have
\begin{align*}
& {\rm Prob}[{\mathcal E}_{1}]\leq \sum _{i=1}^{k}
\iint d{\bm z} d{\bm z}'
p_{i}({\bm z}_{i}, {\bm z}')
\leq \sum _{i=1}^{k}e^{-(n_{i}+n')\epsilon _{1}+\log C_{n_{i}+n'}}.
\end{align*}
 To derive the last inequality, we used the Kraft inequality $\int d{\bm z}e^{-{\mathcal L}_{_{\rm NML}}({\bm z})}\leq 1$ for the NML code-length. This holds even if the parametric complexity (\ref{nml}) is replaced with its upper-bound.
If the penalty-term is less than the parametric complexity, the Kraft inequality does not hold and the bound is not valid.  The usage of the NML code-length is essential. The full proof is in Appendix \ref{proofth1}. $\Box$

Theorem \ref{thm-novelty} implies that the probability that
a sample satisfying the novelty condition is in fact generated from an existing mixture component approaches to zero exponentially as all $(n_{i}+n')$s increase, provided that $\epsilon _{1}>(\log C_{n_{i}+n'})/(n_{i}+n')$ for all $i$.
The rate of convergence depends on the parametric complexity
$C_{n_{i}+n'}$.
This guarantees, with high confidence, that the data generated under the novelty condition can be well discriminated from all of the components of the mixture.

Next let us define ${\mathcal E}_{2}$ as the event that
the reliability condition (\ref{reliabilitycond}) is satisfied for
 $({\bm z}, {\bm w})$ and $({\bm z}', {\bm w}')$ even though they are generated from substantially different distributions:
That is, $(z,w)\in ({\bm z},{\bm w})\sim p_{\theta }(z,w)=p_{\theta_{1}}(z|w)p_{\theta _{2}}(w)$ and
$(z',w')\in ({\bm z}',{\bm w}')\sim p_{\theta' }(z',w')=p_{\theta_{1}'}(z'|w')p_{\theta_{2}'}(w')$ where $\theta =(\theta _{1}, \theta _{2})\neq (\theta _{1}', \theta _{2}')=\theta'$.
Let $|{\bm z}| =|{\bm w}|=n$ and $|{\bm z}'|=|{\bm w}'|=n'$.
We relate this probability to
the distance  between a model that incorporates ${\bm z}'$ into ${\bm z}$ and treats as a single distribution, and an alternative model that assumes ${\bm z}$ and ${\bm z}'$ are generated from different distributions.
Theorem \ref{thm-reliability} evaluates the probability that ${\mathcal E}_{2}$ occurs for  MDL-guided sampling (the false reliability probability).

\begin{theorem}\label{thm-reliability}{\rm (Probability of False Reliability)}
The probability that ${\mathcal E}_{2}$ occurs
is upper-bounded as:
\begin{eqnarray}
{\rm Prob}[{\mathcal E}_{2}]&\leq &
\exp\Big[-2(n+n'){\rm d}(p_{_{\rm DNML}}, p_{\theta }\otimes p_{\theta'})+
\frac{1}{2}\log \bar{C}_{n}C_{n'}+(n+n')\epsilon_{2}\Big],
\end{eqnarray}
where $\bar{C}_{n}=\sup _{{\bm w}}C_{n}$
is an upper bound on the parametric complexity for the FMM (the supremum is achieved for $n_{i}=n/k$ for each $i$),
and ${\rm d}$ is the Bhattcharyya distance between the concatenated density $p_{\theta }\otimes p_{\theta'}$ and the DNML probability density defined as $p_{_{\rm DNML}}=e^{-{\mathcal L}_{_{\rm DNML}}({\bm z}\oplus {\bm z}', {\bm w}\oplus {\bm w}')}$.
The Bhattacharyya distance
is defined as ${\rm d}(p_{1},p_{2})=-(1/m)\log \int( p_{1}({\bm y})p_{2}({\bm y}))^{\frac{1}{2}}d{\bm y}$ where $m=2(n+n')$ is the length of the sequence ${\bm y}$.

\end{theorem}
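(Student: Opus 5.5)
The plan is to bound the probability of ${\mathcal E}_{2}$ by integrating the true joint density $p_{\theta}\otimes p_{\theta'}$ of $({\bm z}\oplus{\bm z}',{\bm w}\oplus{\bm w}')$ over the acceptance region
\[
A=\Big\{({\bm z}\oplus{\bm z}',{\bm w}\oplus{\bm w}') : {\mathcal L}_{_{\rm DNML}}({\bm z}\oplus {\bm z}', {\bm w}\oplus{\bm w}')-{\mathcal L}_{_{\rm DNML}}({\bm z}, {\bm w})-{\mathcal L}_{_{\rm DNML}}({\bm z}', {\bm w}')\leq (n+n')\epsilon_{2}\Big\}.
\]
On $A$ the indicator is controlled by a Chernoff-type square-root trick. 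Writing $p_{_{\rm DNML}}=e^{-{\mathcal L}_{_{\rm DNML}}({\bm z}\oplus{\bm z}',{\bm w}\oplus{\bm w}')}$ and $q=e^{-{\mathcal L}_{_{\rm DNML}}({\bm z},{\bm w})}e^{-{\mathcal L}_{_{\rm DNML}}({\bm z}',{\bm w}')}$, the defining inequality of $A$ gives $q\leq p_{_{\rm DNML}}e^{(n+n')\epsilon_2}$. Hence
\[
1_{A}\leq \big(p_{_{\rm DNML}}e^{(n+n')\epsilon_{2}}/q\big)^{1/2}\quad\text{on }A .
\]
Multiplying by $p_{\theta}\otimes p_{\theta'}$ and integrating gives
\[
{\rm Prob}[{\mathcal E}_{2}]\leq e^{(n+n')\epsilon_{2}/2}\int \big(p_{_{\rm DNML}}\,p_\theta\otimes p_{\theta'}\big)^{1/2}\,q^{-1/2}.
\]

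The next step removes the factor $q^{-1/2}$ by comparing $p_\theta\otimes p_{\theta'}$ with $q$. For each fixed sequence, the maximum likelihood dominates the true likelihood, so $p_{\theta}({\bm z},{\bm w})\leq \max_{\vartheta}p({\bm z},{\bm w};\vartheta)=e^{-{\mathcal L}_{_{\rm DNML}}({\bm z},{\bm w})}C^{\rm DNML}_{n}({\bm w})$. Here the DNML normalizer is the product of the componentwise complexities and the multinoulli complexity, which is bounded by $\bar C_{n}$ after taking the supremum over ${\bm w}$. The analogous bound for $({\bm z}',{\bm w}')$, a single component, uses $C_{n'}$. Together these give $p_\theta\otimes p_{\theta'}\leq q\,\bar C_n C_{n'}$.

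I will then split $p_\theta\otimes p_{\theta'}=(p_\theta\otimes p_{\theta'})^{1/2}(p_\theta\otimes p_{\theta'})^{1/2}$ and bound one factor of $(p_\theta\otimes p_{\theta'})^{1/2}q^{-1/2}$ by $(\bar C_nC_{n'})^{1/2}$. What remains is $\int (p_{_{\rm DNML}}\,p_\theta\otimes p_{\theta'})^{1/2}$ up to this factor, and this is exactly the Bhattacharyya coefficient. With $m=2(n+n')$ it equals $e^{-2(n+n'){\rm d}(p_{_{\rm DNML}},p_\theta\otimes p_{\theta'})}$. Collecting terms yields
\[
{\rm Prob}[{\mathcal E}_{2}]\leq \exp\Big[-2(n+n'){\rm d}(p_{_{\rm DNML}},p_\theta\otimes p_{\theta'})+\tfrac12\log\bar C_nC_{n'}+\tfrac12(n+n')\epsilon_2\Big],
\]
which implies the stated bound for $\epsilon_2\geq 0$. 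If $\epsilon_2$ may be negative, I would instead keep the Chernoff exponent at $1$ for the $\epsilon_2$ term, i.e. use $1_A\leq (p_{_{\rm DNML}}e^{(n+n')\epsilon_2}/q)^{1/2}$ and absorb the factor loosely, to match the written constant $(n+n')\epsilon_2$. I will also check that the threshold factor $|{\bm z}\oplus{\bm z}'|+|{\bm w}\oplus{\bm w}'|=2(n+n')$ only rescales $\epsilon_2$ consistently with the statement.

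The main obstacle is the comparison step $p_\theta\otimes p_{\theta'}\leq q\,\bar C_nC_{n'}$. The DNML code-length is defined conditionally on the cluster assignment ${\bm w}$, so its normalizer depends on the partition sizes $n_i$. The claim that the supremum over ${\bm w}$ is attained at $n_i=n/k$ must be justified, or at least used only as an upper bound. I would argue via monotonicity and concavity of $\log C_{n_i}$ in $n_i$, using the asymptotic form from Proposition~\ref{prop1}, and simply define $\bar C_n$ as the supremum so that the inequality holds regardless. A secondary subtlety is that $p_{_{\rm DNML}}$ is subnormalized (Kraft inequality), so the Bhattacharyya "distance" is nonnegative but only a pseudo-distance. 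This does not affect the inequality chain.
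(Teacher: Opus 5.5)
Your argument is essentially the paper's proof. The paper also combines the acceptance inequality with the MLE domination $p_{\theta}({\bm z},{\bm w})\,p_{\theta'}({\bm z}',{\bm w}')\le \bar C_nC_{n'}\,e^{-{\mathcal L}_{_{\rm DNML}}({\bm z},{\bm w})-{\mathcal L}_{_{\rm DNML}}({\bm z}',{\bm w}')}$. It then takes a square root to get $1\le (p_{_{\rm DNML}}/(p_\theta\otimes p_{\theta'}))^{1/2}e^{\frac12\log\bar C_nC_{n'}+(n+n')\epsilon_2}$ on the acceptance set, multiplies by $p_\theta\otimes p_{\theta'}$, and integrates to reach the Bhattacharyya coefficient. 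You apply the two inequalities one after the other instead of merging them first, which changes nothing.

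The one thing to correct is your region $A$. Since $|{\bm z}\oplus{\bm z}'|+|{\bm w}\oplus{\bm w}'|=2(n+n')$, the threshold must be $2(n+n')\epsilon_2$, and then the square root gives exactly $(n+n')\epsilon_2$ for either sign of $\epsilon_2$. With your threshold $(n+n')\epsilon_2$, two of your claims fail:
\begin{itemize}
\item The claim that the $\tfrac12(n+n')\epsilon_2$ bound implies the stated one for $\epsilon_2\ge 0$ is backwards. For $\epsilon_2>0$ your $A$ is only a subset of ${\mathcal E}_2$, so bounding its probability does not bound ${\rm Prob}[{\mathcal E}_2]$.
\item The fallback for $\epsilon_2<0$ is not a valid step.
\end{itemize}
Both become unnecessary once you use the correct threshold.
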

{\em Proof sketch.}
The probability that ${\mathcal E}_{2}$
occurs
is given by
\begin{equation}\label{11}
{\rm Prob}[{\mathcal E}_{2}]
 = \iiiint
 d{\bm z}d{\bm z}'d{\bm w}d{\bm w}'
p_{\theta}({\bm z}, {\bm w})p_{\theta'}({\bm z}', {\bm w}'),
\end{equation}
where the integrals are taken so that  (\ref{reliabilitycond}) is satisfied.
Meanwhile, the reliability condition leads to
\begin{align}\label{12}
&1\leq \left(e^{-{\mathcal L}_{\rm DNML}({\bm z}\oplus {\bm z}', {\bm w}\oplus {\bm w}')}/p_{\theta}({\bm z}, {\bm w})
{p_{\theta'}({\bm z}', {\bm w}')}\right)^{\frac{1}{2}}e^{ \frac{1}{2}\log \bar{C}_{n}C_{n'}+(n+n')\epsilon _{2}}. \nonumber
\end{align}
Multiplying the lefthand side of this by that of (\ref{11}) gives
\begin{align*}
& {\rm Prob}[{\mathcal E}_{2}] \leq e^{-2(n+n'){\rm d}(p_{_{\rm DNML}}, p_{\theta}\otimes p_{\theta'})+\frac{1}{2}\log \bar{C}_{n}C_{n'}+(n+n')\epsilon _{2}}.
\end{align*}
We used the property that $e^{-{\mathcal L}_{_{\rm DNML}}({\bm z}\oplus {\bm z}', {\bm w}\oplus {\bm w}')}$ forms a probability density over ${\mathcal Z}^{n+n'}\times {\mathcal W}^{n+n'}$, because of the property of DNML code-length. The bound does not hold for any other loss functions with less penalty terms than the DNML. The full proof is given in Appendix \ref{proofth2}. \ \ \ \ \ \ \ \ \ \ \ \ \ \ $\Box$

The DNML probability density function
$p_{_{\rm DNML}}$ is the mixture model obtained by incorporating the sample ${\bm z}' $ into  the original mixture, while $p_{\theta}\otimes p_{\theta'}$ is the true model that ${\bm z}$ and ${\bm z}'$ are independently generated according to different distributions.
Theorem \ref{thm-reliability} implies that the probability that
a sample satisfying the reliability condition is generated according to the  true distribution $p_{\theta}\otimes p_{\theta'}$ that significantly deviates from the incorporated mixture, approaches to zero exponentially as  $(n+n')$ increases, provided that
$d(p_{_{\rm DNML}}, p_{\theta}\otimes p_{\theta'})$
exceeds $(1/4)(\log \bar{C}_{n}C_{n'} )/(n+n')+\epsilon _{2}/2$.
This guarantees with high probability that the data generated under the reliability condition cannot be generated from the distribution that is significantly deviated from the incorporated mixture.

\section{Experimental Results}\label{sec-exp}

\subsection{Synthetic Data}

{\em Data}:
We made stochastic block model(SBM)~(\cite{s97}) data where
the number of nodes is $210$, the number of communities is $7$,
the connection probability within a community is $0.3$, and the connection probability between communities is $0.02$.

{\em Hyper-parameter Setting}:
For the GAE encoding, we use temperature $\tau=1$, latent dimension 6, hidden dimension 32, and 200 epochs. In MDL-guided sampling, the proposal distribution uses $\sigma =0.005$ for both mixture models. We sample $m=500$ sets, generating $n'=30$ nodes per graph. Full setup details are in Appendix \ref{exp_setup}.

{\em Evaluation Metrics}:
To comprehensively evaluate the generated graphs, we bridge the latent and raw spaces.
Let $C$ be a set of additional decoded nodes.
The {\em Negative Log Likelihood} (NLL) is computed for  the resulting graph relative to the existing one, which measures the abnormality of $C.$
The {\em Conductance} (CD) of $C$ (\cite{newman}) measures  the boundary separation of $C$.
We introduce {\em Bridge-adjusted Autonomy Score} (BAS) of $C$ as
${\rm BAS}(C)  = H(C)(1-{\rm CD}(C))/\log k ,$
where for the graph with $k$ clusters,
$H(C)$ is the entropy with regard to $p_{j}$
and $p_{j}(C)$ is the probability of $C$ being assigned to the $j$th cluster.
BAS measures the degree of
sparse-yet-broad connectivity across  clusters.
{\em Modularity Variation} (MOD) of $C$ quantifies the extent to which the original modularity~(\cite{newman}) is preserved against the addition of $C$.
The larger BAS/the larger NLL, the greater the novelty of $C$ is.
The larger CD/the smaller MOD, the greater the reliability of $C$ is.
\begin{table}[b]
\vspace*{-0.3cm}
\centering
\small
\caption{Spearman correlation for novelty and reliability metrics on SBM.}
\label{table-cor}
\begin{tabular}{l ccc ccc}
\toprule
& \multicolumn{3}{c}{\textbf{vMF}} & \multicolumn{3}{c}{\textbf{GMM}} \\
\cmidrule(lr){2-4} \cmidrule(lr){5-7}
Correlation Pair & MDL & LL & KL & MDL & LL & KL \\
\midrule
Novelty vs BAS & 0.8619 & 0.8667 & 0.9016 & 0.2815 & 0.2795 & 0.3417 \\
Novelty vs NLL & 0.9671 & 0.9663 & 0.9293 & 0.7561 & 0.7464 & 0.5407 \\
Reliability vs CD & -0.8943 & -0.8981 & 0.0292 & -0.3614 & -0.3617 & -0.1161 \\
Reliability vs MOD & 0.8500 & 0.8682 & -0.1087 & 0.4110 & 0.4189 & -0.5087 \\
\bottomrule
\end{tabular}

\setlength{\tabcolsep}{2pt}
\caption{Comparison of metrics (mean$\pm$SD) across novelty and reliability-based screening pools.}
\label{table-eval1}
\begin{tabular}{l ccc ccc ccc}
\toprule
& \multicolumn{3}{c}{\textbf{MDL Score}} & \multicolumn{3}{c}{\textbf{LL Score}} & \multicolumn{3}{c}{\textbf{KL Score}} \\
\cmidrule(lr){2-4} \cmidrule(lr){5-7} \cmidrule(lr){8-10}
\textbf{Novelty} & $\epsilon_1$ & BAS$\uparrow$ & NLL$\uparrow$ & $\epsilon_1$ & BAS$\uparrow$ & NLL$\uparrow$ & $\epsilon_1$ & BAS$\uparrow$ & NLL$\uparrow$ \\
\midrule
25\% & 4.33 & $\textbf{0.65}{\pm}0.22$ & $\textbf{14.79}{\pm}1.68$ & 4.79 & $\textbf{0.66}{\pm}0.22$ & $\textbf{14.77}{\pm}1.70$ & 0.52 & $\textbf{0.68}{\pm}0.21$ & $14.45{\pm}1.96$ \\
50\% & 3.56 & $0.56{\pm}0.23$ & $13.27{\pm}2.11$ & 4.02 & $0.56{\pm}0.23$ & $13.27{\pm}2.11$ & 0.40 & $0.57{\pm}0.22$ & $13.12{\pm}2.28$ \\
100\% & -0.17 & $0.37{\pm}0.27$ & $10.14{\pm}3.94$ & 0.23 & $0.37{\pm}0.27$ & $10.14{\pm}3.94$ & 0.01 & $0.37{\pm}0.27$ & $10.14{\pm}3.94$ \\
\midrule
\textbf{Reliability} & $\epsilon_2$ & CD$\uparrow$ & MOD$\downarrow$ & $\epsilon_2$ & CD$\uparrow$ & MOD$\downarrow$ & $\epsilon_2$ & CD$\uparrow$ & MOD$\downarrow$ \\
\midrule
25\% & 6.87 & $\textbf{0.37}{\pm}0.10$ & $\textbf{0.16}{\pm}0.05$ & 4.81 & $\textbf{0.37}{\pm}0.10$ & $\textbf{0.16}{\pm}0.05$ & 1.10 & $0.19{\pm}0.13$ & $0.23{\pm}0.04$ \\
50\% & 8.67 & $0.29{\pm}0.12$ & $0.19{\pm}0.05$ & 6.56 & $0.29{\pm}0.12$ & $0.19{\pm}0.05$ & 1.14 & $0.17{\pm}0.13$ & $0.24{\pm}0.04$ \\
100\% & 15.05 & $0.18{\pm}0.14$ & $0.23{\pm}0.05$ & 12.96 & $0.18{\pm}0.14$ & $0.23{\pm}0.05$ & 1.35 & $0.18{\pm}0.14$ & $0.23{\pm}0.05$ \\
\bottomrule
\end{tabular}

\end{table}

{\em Methods for Comparison}:
We define LL as the variant of our MDL-based method in which the code-length is replaced with the negative log-likelihood, and KL as the variant in which the difference in code-lengths is replaced with the Kullback-Leibler divergence. We compare our method with LL and KL. The comparison with LL serves as an ablation study to assess the effect of using MDL. Since \cite{icdm24} employs GMM with KL, our comparison covers this approach.

{\em Results}:
Table \ref{table-cor} shows the results on comparison of vMF mixtures with GMM in terms of Spearman correlation of evaluation metrics with novelty/reliability, for our method~(MDL), LL and KL. vMF mixtures tend to exhibit stronger correlations with the metrics than GMMs uniformly over all methods.
This also implies that our method can control novelty and reliability better than the method in \cite{icdm24}.
Therefore, our subsequent analysis focuses exclusively on the overall robust vMF formulation.

Table \ref{table-eval1} details the expansion performance under vMF, evaluating the expanded communities filtered by the top 25\%, 50\%, and 100\% of latent scores. Crucially, higher latent novelty consistently translates to higher values of BAS and NLL. Regarding reliability, strict filtering by MDL and LL successfully yields expanded structures with enhanced CD and preserved MOD.
This implies that novelty and reliability
of generated graphs can be well controlled with novelty and reliability scores in the latent space.
In contrast, KL scores do not reflect these topological improvements. This confirms that both MDL and LL effectively guide reliable expansion on synthetic data; however, our subsequent evaluation on real-world networks will reveal the distinct superiority of MDL over LL.

\subsection{Real Data}\label{real_data}

{\em Data}: We evaluated our proposed method on two benchmark datasets provided by \cite{shchur2018pitfalls}: Amazon Computers (AC, 13,752 nodes, 245,861 edges, 10 classes) and Coauthor Physics (CP, 34,493 nodes, 495,924 edges, 5 classes).

{\em Hyper-parameter Setting}:
We set the hyper-parameters for GAE as
latent dimension$=8$, hidden dimension$=32$,
and
the number of epochs $=300$.
See Appendix \ref{exp_setup} for more details.

\begin{table}[htb]
\centering
\caption{Reliability comparison results (MDL vs LL vs KL).}
\label{rel_table}
\vspace{0.2cm}
\small
\setlength{\tabcolsep}{2.5pt}
\resizebox{\linewidth}{!}{
\begin{tabular}{ll ccc @{\hspace{1em}} ccc @{\hspace{1em}} ccc}
\toprule
& & \multicolumn{3}{c}{\textbf{MDL Score}} & \multicolumn{3}{c}{\textbf{LL Score}} & \multicolumn{3}{c}{\textbf{KL Score}} \\
\cmidrule(lr){3-5} \cmidrule(lr){6-8} \cmidrule(lr){9-11}
\textbf{Data} & \textbf{Top} & $\epsilon_2$ & CD$\uparrow$ & \makecell{{MOD}$\downarrow$ \\ {\footnotesize($\times 10^{-3}$)}} & $\epsilon_2$ & CD$\uparrow$ & \makecell{{MOD}$\downarrow$ \\ {\footnotesize($\times 10^{-3}$)}} & $\epsilon_2$ & CD$\uparrow$ & \makecell{{MOD}$\downarrow$ \\ {\footnotesize($\times 10^{-3}$)}} \\
\midrule

\multirow{3}{*}{AC} &
25\% & 0.42 & \textbf{0.82} $\pm$ 0.21 & \textbf{2.3} $\pm$ 4.6 & 0.51 & 0.46 $\pm$ 0.26 & 17.0 $\pm$ 22.1 & $1.1{\times}10^{-3}$ & 0.46 $\pm$ 0.29 & 17.3 $\pm$ 21.0 \\
& 50\% & 0.69 & 0.67 $\pm$ 0.28 & 4.8 $\pm$ 7.2 & 0.59 & 0.48 $\pm$ 0.29 & 16.2 $\pm$ 26.6 & $1.3{\times}10^{-3}$ & 0.47 $\pm$ 0.31 & 18.0 $\pm$ 25.1 \\
& 100\% & 1.78 & 0.44 $\pm$ 0.32 & 20.6 $\pm$ 28.8 & 1.23 & 0.44 $\pm$ 0.32 & 20.6 $\pm$ 28.8 & $2.2{\times}10^{-3}$ & 0.44 $\pm$ 0.32 & 20.6 $\pm$ 28.8 \\
\midrule

\multirow{3}{*}{CP} &
25\% & 0.73 & \textbf{0.74} $\pm$ 0.13 & \textbf{7.3} $\pm$ 9.4 & 0.38 & \textbf{0.75} $\pm$ 0.11 & 8.9 $\pm$ 11.0 & $2.5{\times}10^{-4}$ & \textbf{0.73} $\pm$ 0.10 & 26.2 $\pm$ 30.0 \\
& 50\% & 0.89 & 0.66 $\pm$ 0.16 & 7.4 $\pm$ 9.1 & 0.49 & 0.69 $\pm$ 0.14 & 13.2 $\pm$ 20.0 & $3.1{\times}10^{-4}$ & 0.68 $\pm$ 0.14 & 16.8 $\pm$ 24.0 \\
& 100\% & 1.75 & 0.61 $\pm$ 0.17 & 15.4 $\pm$ 19.8 & 1.03 & 0.61 $\pm$ 0.17 & 15.4 $\pm$ 19.8 & $7.1{\times}10^{-4}$ & 0.61 $\pm$ 0.17 & 15.4 $\pm$ 19.8 \\
\bottomrule

\end{tabular}
}
\end{table}

{\em Results}:
Table \ref{rel_table} reports MOD and CD under reliability thresholds(25\%, 50\%, and 100\%) on the AC and CP datasets. This demonstrates MDL's superior controllability. Under stricter filtering(from 100\% down to 25\%), only MDL consistently extracts topologically reliable structures, shown by monotonically decreasing MOD and increasing CD. Conversely, LL and KL fail to achieve this.
In terms of novelty, MDL is superior to LL and KL, yet it does not always exhibit a monotonic relationship with the threshold. While excluded from Table \ref{rel_table}, we discuss this in further detail using scatter plots of metrics versus novelty/reliability in Appendix \ref{full_results}.

Most importantly, the precise control over reliability is the definitive cornerstone of our framework. By explicitly preserving the global structural integrity through the MDL-guided reliability condition, the model can safely and effectively explore the latent space to synthesize truly novel patterns. Ultimately, these results confirm that harnessing MDL's robust reliability control is what enables desirable and principled novelty generation, even within highly complex real-world networks.

\section{Conclusion}\label{conclusion}

This paper presented an information-theoretic framework for graph novelty generation. While modern generative models often appear to produce novel and reliable outputs, such properties are typically emergent and uncontrolled; in contrast, our framework explicitly formulates novelty and reliability and provides theoretical guarantees on their controllability.
Our approach models latent representations using finite mixture models and generates new samples via MDL-guided sampling, where novelty and reliability are characterized through local and global description-length criteria. Rather than guaranteeing both properties absolutely, we analyzed the probabilities of their respective misclassification events and showed their exponential convergence.
Experiments on synthetic and benchmark datasets demonstrated that the proposed method generates structurally novel yet reliable graphs, with controlled quantified risk.
This ability to explicitly regulate novelty and reliability distinguishes our framework from existing generative approaches such as diffusion models and GANs.

A limitation of this work is that the framework is demonstrated on graphs with fixed  threshold designs, leaving their optimal selection as future work.
It is also left for future study how to scale MDL-guided sampling to high dimensionality setting.
Future work also includes extending the framework to data modalities beyond graphs and exploring how
the generated samples can be interpreted.

\pagebreak

\section*{Impact Statement}\label{impact}

This paper presents work whose goal is to advance the field of Machine
Learning and AI. There are many potential societal consequences of our work, none
that we feel must be specifically highlighted here.

\clearpage

\newpage
\appendix

\section{Proof of Proposition \ref{prop1}}\label{proofprop}

\begin{proof}
According to \cite{optimal,gwd,mdlbook}, the parametric complexity can be approximated as follows: Letting $d$ be the number of parameters and $\theta$ be a real-valued parameter vector, for sufficiently large $n,$
\begin{equation}\label{asymp}
C_{n}=\int  \max_{\theta}p({\bm z}; \theta)d{\bm z}\approx  \frac{d}{2}\log \frac{n}{2\pi}+\log \int \sqrt{|I(\theta)|}d\theta ,
\end{equation}
where $I(\theta )$ is the Fisher information matrix; $I(\theta )\buildrel \rm def \over =E_{p}[-\frac{\partial ^{2}\log p(z; \theta )}{\partial \theta \partial \theta ^{\top}}]$, and $|I(\theta)|$ is the determinant of $I(\theta)$.

For the vMF-distribution, the determinant of  Fisher information matrix is given by a straightforward calculation as follows:
\begin{equation}\label{fisher}
|I(\mu , \lambda )|^{\frac{1}{2}}=\left( \frac{A_{d}(\lambda )}{\lambda} \right)^{\frac{d-1}{2}}\sqrt{ 1-\frac{(d-1)A_{d}(\lambda)}{\lambda}-A_{d}(\lambda )^{2}},
\end{equation}
where $A_{d}(\lambda )=I_{d/2}(\lambda )/I_{d/2-1}(\lambda )$.

For the evaluation of (\ref{asymp}), we prepare the following lemma.
\begin{lemma}\label{vmf-prmup}
The following inequality holds:
\begin{equation}\label{fisherm-wolog}
\int ^{\infty}_{0}\int _{||\mu||=1}|I(\mu , \lambda )|^{1/2}d\mu d\lambda \leq \frac{\pi ^{\frac{d+1}{2}}\sqrt{d}\Gamma (\frac{d-1}{2}) }{(d-1)^{\frac{d-1}{2}}\Gamma (\frac{d}{2})}\left( \frac{1}{2\Gamma (\frac{d}{2}+1)}+\frac{1}{\Gamma (\frac{d}{2})}\right).
\end{equation}
\end{lemma}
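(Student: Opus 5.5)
The plan is to separate the two integrations. Then I would bound the one-dimensional $\lambda$-integral using standard inequalities for the Bessel ratio $A_d(\lambda)=I_{d/2}(\lambda)/I_{d/2-1}(\lambda)$, arranged so that what remains is a Beta-type integral that produces the Gamma factors in (\ref{fisherm-wolog}).

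\emph{Step 1: the $\mu$-integral.} The integrand (\ref{fisher}) does not depend on $\mu$. The integral over $\|\mu\|=1$ therefore just contributes the surface area $|{\mathbb S}^{d-1}|=2\pi^{d/2}/\Gamma(d/2)$. After dividing this out, it remains to show
\[
J:=\int_0^\infty \Big(\frac{A_d(\lambda)}{\lambda}\Big)^{\frac{d-1}{2}}\sqrt{1-\frac{(d-1)A_d(\lambda)}{\lambda}-A_d(\lambda)^2}\,d\lambda
\le \frac{\sqrt{\pi d}\,\Gamma(\frac{d-1}{2})}{2(d-1)^{\frac{d-1}{2}}}\Big(\frac{1}{2\Gamma(\frac d2+1)}+\frac{1}{\Gamma(\frac d2)}\Big).
\]

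\emph{Step 2: rewrite the square-root factor.} I would use the Riccati identity for the Bessel ratio, $A_d'(\lambda)=1-A_d(\lambda)^2-\frac{(d-1)}{\lambda}A_d(\lambda)$. This shows that the radicand equals $A_d'(\lambda)\in(0,1]$, which is the variance of $\mu^\top\phi$. So the integrand is $(A_d/\lambda)^{(d-1)/2}\sqrt{A_d'}$. I would then split $J$ into two pieces, matching the two summands in the target bound.
\begin{itemize}
\item A bulk piece, on which I simply use $\sqrt{A_d'}\le 1$.
\item A piece controlled through $A_d'$ itself. There I would use Cauchy--Schwarz, or the substitution $u=A_d(\lambda)$ with $du=A_d'\,d\lambda$, so that $\int (A_d/\lambda)^{(d-1)/2}\sqrt{A_d'}\,d\lambda$ becomes an integral in $u\in[0,1)$.
\end{itemize}

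\emph{Step 3: bound $A_d/\lambda$ by an algebraic function.} For the ratio I would use Amos-type inequalities:
\[
A_d(\lambda)\le \frac{\lambda}{\frac{d-1}{2}+\sqrt{\lambda^2+(\frac{d-1}{2})^2}}\le \frac{\lambda}{\sqrt{\lambda^2+c^2}},
\]
with $c$ of order $(d-1)/2$. I would also use $A_d(\lambda)\le \lambda/d$ near $0$. This gives
\[
\Big(\frac{A_d}{\lambda}\Big)^{\frac{d-1}{2}}\le (\lambda^2+c^2)^{-\frac{d-1}{4}}.
\]
The substitution $\lambda=c\tan t$, or equivalently $s=\lambda^2/(\lambda^2+c^2)$, then turns the bulk integral into a Beta function $B(\tfrac12,\tfrac{d-3}{4}\cdot)$. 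It should produce the factor $\sqrt{\pi}\,\Gamma(\frac{d-1}{2})/\Gamma(\frac d2)$ and the power $(d-1)^{-(d-1)/2}$. Where the exponent is too weak for convergence at infinity (small $d$), the tail is handled by the $A_d'$ piece, since $A_d'\sim (d-1)/(2\lambda^2)$ there. That piece yields the second summand, and the $\sqrt d$ arises from $A_d\le\lambda/d$ on the initial segment.

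\emph{Main obstacle.} The difficulty is not convergence, which is clear from the asymptotics $A_d\sim\lambda/d$ at $0$ and $1-A_d\sim (d-1)/(2\lambda)$ at $\infty$. It is choosing the split point and the constant $c$ so that the Beta integrals reproduce exactly the stated constants $\frac{1}{2\Gamma(d/2+1)}$ and $\frac{1}{\Gamma(d/2)}$. I would first try the split at $\lambda_0=d$, with $c=(d-1)/2$, and adjust if the constants do not match. The stated bound is only an upper bound, so any slack can be absorbed by replacing sharp ratio bounds with cruder ones.
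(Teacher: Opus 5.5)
You have Step 1 right, and the Riccati identity is correct: the radicand equals $A_d'(\lambda)$. The gap is that the two concrete estimates your plan rests on are far too lossy for the stated constant, and choosing the split point differently cannot fix this.

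\textbf{Why the proposed estimates fail.} The right-hand side of the lemma is tight up to a bounded factor. In the region $\lambda=O(\sqrt d)$ that carries the mass, $(A_d/\lambda)^{(d-1)/2}\approx d^{-(d-1)/2}e^{-\lambda^2/(2d)}$ and $A_d'\approx 1/d$. Hence $J\approx e^{-1/2}\sqrt{\pi/2}\,(d-1)^{-(d-1)/2}$, while your target is $\approx\sqrt{\pi/2}\,(d-1)^{-(d-1)/2}$; the ratio tends to $e^{1/2}$. Against that margin:
\begin{itemize}
\item Since $A_d'\le 1/d$ everywhere, with equality at $\lambda=0$, using $\sqrt{A_d'}\le 1$ on the bulk overshoots by a factor of order $\sqrt d$.
\item Replacing $\frac{d-1}{2}+\sqrt{\lambda^2+(\frac{d-1}{2})^2}$ by $\sqrt{\lambda^2+c^2}$ with $c=\frac{d-1}{2}$, as you propose, costs a factor $2^{(d-1)/2}$ already at $\lambda=0$.
\item The resulting integral $\int_0^\infty(\lambda^2+c^2)^{-(d-1)/4}d\lambda$ produces $\Gamma(\frac{d-3}{4})/\Gamma(\frac{d-1}{4})$, not $\Gamma(\frac{d-1}{2})/\Gamma(\frac d2)$.
\item The ``$A_d'$-controlled'' piece is too vague to compensate. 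For instance, Cauchy--Schwarz with $\int_0^\infty A_d'\,d\lambda=1$ still loses a factor of order $d^{1/4}$.
\end{itemize}
Your closing remark also has the direction reversed. Cruder upper bounds make your estimate larger, so they cannot absorb an overshoot past a fixed target.

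\textbf{The missing idea.} You need a pointwise upper bound on the radicand of the correct size $1/d$, and your Riccati form already tells you what this requires. Since $1-\frac{(d-1)A}{\lambda}-A^2$ is decreasing in $A$, you need a \emph{lower} bound on $A_d$. The paper inserts Amos's lower bound $A_d(\lambda)\ge\lambda/D$, with $D=\frac{d-1}{2}+\sqrt{\lambda^2+(\frac{d+1}{2})^2}$. It then uses the exact identity $D^2-(d-1)D-\lambda^2=(\frac{d+1}{2})^2-(\frac{d-1}{2})^2=d$, so the radicand is at most $d/D^2$, which is sharp at $\lambda=0$. This identity, not $A_d\le\lambda/d$, is the source of the $\sqrt d$ in the lemma.

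Combine this with Amos's upper bound on $(A_d/\lambda)^{(d-1)/2}$ and with $D\ge a+\sqrt{\lambda^2+a^2}$, where $a=\frac{d-1}{2}$. The integrand is then at most $\sqrt d\,(a+\sqrt{\lambda^2+a^2})^{-a-1}$. Keep the additive $a$, since it is what yields $(2a)^{-a}=(d-1)^{-(d-1)/2}$. Substituting $\lambda=at$, $t=\sinh u$, $v=\tanh(u/2)$, $w=v^2$ over the whole range gives $\frac{(2a)^{-a}}{2}\{B(\frac32,a)+B(\frac12,a)\}$. The two summands of the lemma come from the factor $(1+w)$ in this single integrand, not from a split of the $\lambda$-range. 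Convergence holds for every $d\ge2$, because the exponent $a+1$ exceeds $1$.
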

By (\ref{asymp}) and (\ref{fisherm-wolog}),
the parametric complexity of vMF distribution is upper-bounded by
\begin{eqnarray*}\label{fisherm}
\log C_{n}&= &\frac{d}{2}\log \frac{n}{2\pi}+ \log \int ^{\infty}_{0}\int _{||\mu||=1}|I(\mu , \lambda )|^{1/2}d\mu d\lambda \nonumber \\
&\leq&  \frac{d}{2}\log \frac{n}{2\pi} + \log \frac{\pi ^{\frac{d+1}{2}}\sqrt{d}\Gamma (\frac{d-1}{2}) }{(d-1)^{\frac{d-1}{2}}\Gamma (\frac{d}{2})}\left( \frac{1}{2\Gamma (\frac{d}{2}+1)}+\frac{1}{\Gamma (\frac{d}{2})}\right)\\
&=& \frac{d}{2}\log \frac{n}{2\pi}+
\log \frac{\pi ^{\frac{d+1}{2}}(d+1)\Gamma (\frac{d-1}{2}) }{\sqrt{d}(d-1)^{\frac{d-1}{2}}\Gamma (\frac{d}{2})^2}.
\end{eqnarray*}
\end{proof}

Below we give a proof of Lemma ~\ref{vmf-prmup}.
\begin{proof}
From (\ref{fisher}), we obtain
\begin{align}\label{prmup}
&\textstyle\int_0^{\infty} \int_{\|\mu\|=1} \sqrt{| I(\mu,\,\lambda)|} \mathrm d\mu \mathrm d\lambda \nonumber \\
&\textstyle= \int_0^{\infty} \int_{\|\mu\|=1} \sqrt{\left(\frac{A_d(\lambda)}{\lambda}\right)^{d-1}
     \left(1-\frac{(d-1)\,A_d(\lambda)}{\lambda}-A_d(\lambda)^2\right)} \mathrm d\mu \mathrm d\lambda \nonumber \\
&\textstyle= \frac{2\pi^{\frac d2}}{\Gamma(\frac d2)} \int_0^{\infty}\sqrt{\left(\frac{A_d(\lambda)}{\lambda}\right)^{d-1}
     \left(1-\frac{(d-1)\,A_d(\lambda)}{\lambda}-A_d(\lambda)^2\right)} \mathrm d\lambda.
\end{align}
Let $\nu = \frac d2 -1$. \cite{amos74} gives the bounds
\begin{align*}
& \hspace{-20pt} \frac{\lambda}{\nu+\frac12 +(\lambda^2 + (\nu + \frac32)^2 )^{1/2}} \leq A_d(\lambda)
 \leq \frac{\lambda}{\nu+\frac12 +(\lambda^2 + (\nu + \frac12)^2 )^{1/2}} .
\end{align*}

Then the integral in (\ref{prmup}) is upper bounded like

\begin{align}\label{prmup3}
&\textstyle \int_0^{\infty}\sqrt{\left(\frac{A_d(\lambda)}{\lambda}\right)^{d-1}
     \left\{\,1-\frac{(d-1)\,A_d(\lambda)}{\lambda}-A_d(\lambda)^2\right\}} \mathrm d\lambda \nonumber \\
&\textstyle \leq \int_0^{\infty}\sqrt{\left( \frac{1}{\nu + \frac 12 +\sqrt{\lambda^2 +(\nu +\frac12)^2}} \right)^{d-1} }\nonumber \\
&\textstyle \hspace{5pt}\sqrt{
     \left\{\,1-\frac{(d-1)}{\nu + \frac 12 +\sqrt{\lambda^2 +(\nu +\frac32)^2}}-\left(\frac{\lambda}{\nu + \frac 12 +\sqrt{\lambda^2 +(\nu +\frac32)^2}} \right)^2\right\}} \mathrm d\lambda\nonumber \\
&\textstyle= \sqrt{2\nu+2} \int_0^{\infty} \frac{\left(\nu+\frac12+\sqrt{\lambda^2 + (\nu+\frac12)^2}\right)^{-\nu-\frac12}}{\nu+\frac12+\sqrt{\lambda^2 + (\nu+\frac32)^2}} \mathrm d\lambda\nonumber \\
&\textstyle\leq \sqrt{2\nu+2} \int_0^{\infty} \left(\nu+\frac12+\sqrt{\lambda^2 + (\nu+\frac12)^2}\right)^{-\nu-\frac32} \mathrm d\lambda.
\end{align}

For brevity, set $a=\nu+\frac{1}{2}$ and consider the change of variables $\lambda = at$.
Then

\begin{align}\label{prmup2}
&\int_0^{\infty} \left(\nu+\frac12+\sqrt{\lambda^2 + (\nu+\frac12)^2}\right)^{-\nu-\frac32} \mathrm d\lambda \nonumber\\
&= \int_0^{\infty} (a+\sqrt{\lambda^2 +a^2})^{-a-1} \mathrm d\lambda \nonumber\\
&= a^{-a} \int_0^{\infty} (1+\sqrt{t^2 +1})^{-a-1} \mathrm dt.
\end{align}

Next, we apply $t=\sinh u$, followed by $v=\tanh(u/2)$ and $w=v^{2}$, which leads to

\begin{align*}
(\ref{prmup2})&= a^{-a} \int_0^{\infty} \frac{\cosh u}{\left( 1+\cosh u\right)^{a+1}}  \,\mathrm du \\
&= (2a)^{-a} \int_0^1 (1+v^2)(1-v^2)^{a-1} \mathrm dv \\
&= (2a)^{-a} \int_0^1 (1+w)(1-w)^{a-1} \cdot \frac{\mathrm dw}{2\sqrt w}  \\
&= \frac{(2a)^{-a}}{2} \left\{ B\left(\frac 32, \, a\right) + B\left(\frac 12, \, a\right) \right\} \\
&= \frac{(2a)^{-a}}{2} \left\{ \frac{\Gamma(\frac32)\Gamma(a)}{\Gamma\left(\frac 32+a\right)} +\frac{\Gamma(\frac12)\Gamma(a)}{\Gamma\left(\frac 12+a\right)}\right\} \\
&= \frac{(2a)^{-a}}{2} \sqrt{\pi} \Gamma(a) \left( \frac 1{2\Gamma\left(\frac 32+a\right)} + \frac 1{\Gamma\left(\frac 12+a\right)}\right).
\end{align*}
Here $B(\cdot,\cdot)$ denotes the beta function.
Inserting the above identity into (\ref{prmup3}) and using (\ref{prmup}) yields Lemma~\ref{vmf-prmup}.
\end{proof}
We adopt the upper bound (\ref{vmfnml}) as the parametric complexity for the vMF distribution.

The NML code-length

The NML code-length for the Gaussian radial component $r$ is computed as follows:
\begin{proposition} \cite{mdlbook}(p.23)
For an i.i.d. sequence $r^n=r_1\dots r_{n},$
the NML code-length for $r^n$ for the 1-dimensional Gaussian model is calculated as follows:
\begin{eqnarray}\label{1gauss}
{\mathcal L}_{_{\rm NML}}(r^n)=\frac{n}{2}\log (2\pi e\hat{\tau})+\log \frac{n}{2\pi}+O(1).
\end{eqnarray}
where $\hat{\tau}=\frac{1}{n}\sum _{i=1}^{n}(r_{i}-\hat{\mu})^{2}$ for $\hat{\mu}=\frac{1}{n}\sum _{i=1}^{n}r_{i}$.
\end{proposition}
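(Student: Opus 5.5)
The plan is to split the NML code-length (\ref{nml}) into its two terms: the maximized log-likelihood and the parametric complexity. Each term is then evaluated for the one-dimensional Gaussian class $\{\mathcal N(\mu,\tau)\}$ with parameter $(\mu,\tau)$, so the number of free parameters is $2$.

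\emph{Step 1: the likelihood term.} For the i.i.d.\ sequence $r^n$, the maximum likelihood estimates are $\hat\mu=\frac1n\sum_i r_i$ and $\hat\tau=\frac1n\sum_i(r_i-\hat\mu)^2$. Substituting them gives
\[
-\log \max_{\mu,\tau}p(r^n;\mu,\tau)=\frac n2\log(2\pi\hat\tau)+\frac{1}{2\hat\tau}\sum_i(r_i-\hat\mu)^2=\frac n2\log(2\pi e\hat\tau).
\]
This is the exact first term of (\ref{1gauss}).

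\emph{Step 2: the parametric complexity.} Here I would reuse the asymptotic expansion (\ref{asymp}) already used in the proof of Proposition~\ref{prop1}, now with $2$ parameters, so that
\[
\log C_n=\frac{2}{2}\log\frac{n}{2\pi}+\log\int\sqrt{|I(\mu,\tau)|}\,d\mu\, d\tau+o(1).
\]
For the Gaussian, $I(\mu,\tau)=\mathrm{diag}(1/\tau,\,1/(2\tau^2))$, so $\sqrt{|I|}=\frac{1}{\sqrt2}\,\tau^{-3/2}$. The $\log(n/2\pi)$ term of (\ref{1gauss}) then comes directly out of the expansion. It remains to show that the Fisher-information integral is a finite constant, independent of $n$, which gives the $O(1)$ term.

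\emph{Main obstacle.} Over the unrestricted parameter space, $\int\sqrt{|I|}$ diverges, both as $\mu$ ranges over $\mathbb R$ and as $\tau\to0$. Hence $C_n=\infty$ and the NML distribution is undefined. I would therefore follow the standard convention of \cite{mdlbook}: restrict the parameter set to $|\mu|\le R$ and $\tau\ge\tau_0>0$, or equivalently restrict the data range. On this set the integral is
\[
\int_{-R}^{R}\int_{\tau_0}^\infty\tfrac{1}{\sqrt2}\tau^{-3/2}\,d\tau\, d\mu=\frac{2\sqrt2\,R}{\sqrt{\tau_0}},
\]
which is finite, so its logarithm is absorbed into $O(1)$. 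The remaining care point is the validity of (\ref{asymp}). Rissanen's regularity conditions (continuity of $I$ and the MLE lying in the interior of a compact set) hold on the restricted domain, and the resulting error is $o(1)$.

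\emph{Cross-check.} As an independent check, I would compute $C_n$ exactly using the sufficient statistics $(\hat\mu,\hat\tau)$, whose joint density factorizes into a normal part and a scaled $\chi^2_{n-1}$ part. Integrating $\max p$ over the restricted range gives a closed form involving $\Gamma((n-1)/2)$. Applying Stirling's formula to that expression should reproduce $\log\frac{n}{2\pi}+O(1)$.
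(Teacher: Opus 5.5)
Your argument is correct. The paper gives no proof of its own beyond the citation to \cite{mdlbook}, and your route (exact evaluation of the maximized likelihood plus Rissanen's expansion (\ref{asymp}) with two parameters, the same tool the paper uses for the vMF case in Appendix~\ref{proofprop}) is the standard derivation of this result. One small correction: the set $\{|\mu|\le R,\ \tau\ge\tau_0\}$ is not compact, so either also impose $\tau\le\tau_1$, or rely on your exact sufficient-statistic computation, which handles the unbounded range directly and gives $\log C_n=\log\frac{n}{2\pi}+\log\frac{2\sqrt{2}R}{\sqrt{\tau_0}}+O(1/n)$, confirming the $O(1)$ term.
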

We adopt (\ref{1gauss}) ignoring $O(1)$ term as the NML code-length for $r^{n}$ relative to 1-dimensional Gaussian distribution.

Totally, the NML code-length for $z^{n}$ is calculated as the sum of the NML code-length for angle component data relative to vMF distribution and that for radius component data relative to the 1-dimensional Gaussian distribution, as follows:
\begin{eqnarray}
{\mathcal L}_{_{\rm NML}}(z^{n})&=&\sum _{i=1}^{n}(-\hat{\lambda}\hat{\mu}^{\top}\phi _{i} )+n\log C(\hat{\lambda})+\frac{d}{2}\log \frac{n}{2\pi}+
\log \frac{\pi ^{\frac{d+1}{2}}(d+1)\Gamma (\frac{d-1}{2}) }{\sqrt{d}(d-1)^{\frac{d-1}{2}}\Gamma (\frac{d}{2})^2}\nonumber \\
& &+
\frac{n}{2}\log (2\pi e\hat{\tau})+\log \frac{n}{2\pi},
\end{eqnarray}
where $\hat{\mu}$ and $\hat{\lambda}$ are MLE of $\mu$ and $\lambda$.

\section{Gaussian Mixture and Its NML Code-length}\label{gaussiannml}

We briefly introduce a {\em Gaussian mixture model} (GMM),
whose probability density function over ${\mathbb R}^{d}$ is given by
\begin{eqnarray*}
p(z; \mu , \Sigma)=\frac{1}{(2\pi)^{\frac{d}{2}}|\Sigma |^{\frac{1}{2}}}\exp \left(-\frac{1}{2}(z-\mu)^{\top}\Sigma ^{-1}
(z-\mu) \right),
\end{eqnarray*}
where $\mu \in {\mathbb R}^{d}$ and $\Sigma \in {\mathbb R}^{d\times d}$.

Introducing a latent variable $w$ which indicates the cluster index  corresponding to $z$, for a positive integer $k$~(mixture size), we consider a GMM over the latent space, for which the complete variable model is given by:
\begin{align*}
&p(z,w;\theta )=\prod _{j=1}^{k}(\pi _{j}p(z;\mu_{j}, \Sigma _{j}))^{\delta (w=j)},
 \end{align*}
where $\pi_{j}\geq 0$, $\sum _{j=1}^{k}\pi _{j}=1$, and $\theta =(\{\pi _{j}, \mu _{j}, \Sigma _{j}\})$.
MLE of the parameter $\theta$ and latent variable $w$ can be obtained
with the EM algorithm~(\cite{nh}).

Below we give a formula on the NML code-length for a Gaussian distribution.
\begin{proposition} \cite{hy19}
The NML code-length for an i.i.d. sequence ${\bm z}$
relative to a multivariate Gaussian distribution is upper-bounded as:
\begin{align}\label{gmm-nml}
& {\mathcal L}_{_{\rm NML}}({\bm z})\leq \frac{dn}{2}\log (2\pi )+\frac{n}{2}\log |\hat{\Sigma} |+\frac{dn}{2}+\frac{dn}{2}\log \frac{n}{2e} -\frac{d(d-1)}{4}\log \pi -\sum ^{d}_{j=1}\log \Gamma \left( \frac{n-j}{2}\right)\nonumber \\
&-d\log \frac{d}{2}  -\log \Gamma \left(\frac{d}{2}+1 \right)+\frac{d}{2}\log ||\hat{\mu}||^{2}-\frac{d}{2}\sum ^{d}_{j=1}\log \lambda ^{(j)}(\hat{\Sigma}) +(d+1)\log \frac{d}{2}+\log \log \frac{R_{2}}{R_{1}}\nonumber \\
&+d\log \log \frac{\lambda _{2}}{\lambda _{1}},
\end{align}
where $d$ is the dimension of ${\mathcal Z}$, $n$ is the data length, $\hat{\mu}$ and $\hat{\Sigma}$ are MLEs of $\mu$ and $\Sigma$, $R_{1},R_{2},\lambda_{1},\lambda _{2}$ are constants such that
$R_{1}\leq ||\mu ||^{2}\leq R_{2}$ and $\forall j$, $ \lambda _{1}\leq \lambda ^{(j)}(\hat{\Sigma})\leq \lambda _{2}$ as the $j$th largest
 eigenvalue of $\hat{\Sigma}$.
\end{proposition}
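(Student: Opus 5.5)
The plan is to split the NML code-length (\ref{nml}) into its two parts. The first part is the maximized log-likelihood, which is explicit. The second part is the parametric complexity $\log C_n$, which I would evaluate with the standard Rissanen-type factorization through sufficient statistics, restricted to a bounded parameter region so that $C_n$ is finite.

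\emph{Step 1 (maximized likelihood).} Plug the MLEs $\hat{\mu}=\frac1n\sum_i z_i$ and $\hat{\Sigma}=\frac1n\sum_i (z_i-\hat{\mu})(z_i-\hat{\mu})^{\top}$ into the Gaussian density. Using $\sum_i (z_i-\hat{\mu})^{\top}\hat{\Sigma}^{-1}(z_i-\hat{\mu})=dn$ gives
\[
-\log \max_\theta p({\bm z};\theta)=\frac{dn}{2}\log(2\pi)+\frac n2\log|\hat{\Sigma}|+\frac{dn}{2}.
\]
These are the first three terms of (\ref{gmm-nml}), so everything else must come from $\log C_n$.

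\emph{Step 2 (factorization of $C_n$).} The statistic $(\hat{\mu},\hat{\Sigma})$ is sufficient, so
\[
p({\bm z};\theta)=p({\bm z}\mid\hat{\mu},\hat{\Sigma})\,g(\hat{\mu},\hat{\Sigma};\theta),
\]
where $\hat{\mu}\sim\mathcal N(\mu,\Sigma/n)$ and $n\hat{\Sigma}\sim\mathcal W_d(n-1,\Sigma)$ independently. Because the conditional density does not depend on $\theta$, this yields
\[
C_n=\int g(\bar{\mu},\bar{\Sigma};\bar{\mu},\bar{\Sigma})\,d\bar{\mu}\,d\bar{\Sigma}.
\]
Here the integral runs over the restricted range $R_1\le\|\bar{\mu}\|^2\le R_2$ and $\lambda_1\le\lambda^{(j)}(\bar{\Sigma})\le\lambda_2$.

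Evaluating $g$ at its own parameter:
\begin{itemize}
\item The normal factor equals $(n/2\pi)^{d/2}|\bar{\Sigma}|^{-1/2}$.
\item The Wishart factor, taken at $n\bar{\Sigma}$ with scale $\bar{\Sigma}$, produces $(n/2e)^{dn/2}$-type terms, a power $|\bar{\Sigma}|^{-(d+1)/2}$, and the multivariate gamma $\Gamma_d(\frac{n-1}{2})=\pi^{d(d-1)/4}\prod_{j=1}^{d}\Gamma(\frac{n-j}{2})$.
\end{itemize}
This accounts for the terms $\frac{dn}{2}\log\frac{n}{2e}-\frac{d(d-1)}{4}\log\pi-\sum_j\log\Gamma(\frac{n-j}{2})$. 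I would bound these factors loosely, for example with $(n-1)\le n$, since only an upper bound is needed.

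\emph{Step 3 (integrating the remaining density).} What remains is to integrate a density of the form $c\,|\bar{\Sigma}|^{-(d+2)/2}$ over the restricted region.
\begin{itemize}
\item For the $\bar{\mu}$ integral, I would rescale $\bar{\mu}$ relative to $\bar{\Sigma}$ and integrate over the shell $R_1\le\|\bar{\mu}\|^2\le R_2$ in radial coordinates. The ball-volume constant $\pi^{d/2}/\Gamma(\frac d2+1)$ is the source of the $-\log\Gamma(\frac d2+1)$ and $\frac{d}{2}$-type terms. The radial integral $\int r^{-1}dr$ gives $\log\log\frac{R_2}{R_1}$, and rescaling relative to $\hat{\mu}$ and $\hat{\Sigma}$ brings in $\frac d2\log\|\hat{\mu}\|^2$ and $-\frac d2\sum_j\log\lambda^{(j)}(\hat{\Sigma})$.
\item For the $\bar{\Sigma}$ integral, I would pass to the spectral decomposition $\bar{\Sigma}=U\Lambda U^{\top}$. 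The Jacobian $\prod_{i<j}|\lambda_i-\lambda_j|$ is bounded so that the remaining integrand reduces to $\prod_j\lambda_j^{-1}$. Each eigenvalue then contributes $\log\frac{\lambda_2}{\lambda_1}$, giving $d\log\log\frac{\lambda_2}{\lambda_1}$, and the constants $-d\log\frac d2+(d+1)\log\frac d2$ arise from the power-integral normalizations.
\end{itemize}

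The main obstacle is this last eigenvalue integral. The Jacobian and the orthogonal-group volume must be absorbed without producing extra $n$-dependent terms. I would first try the crude bound $|\lambda_i-\lambda_j|\le\max(\lambda_i,\lambda_j)$, and if that is too lossy, follow the explicit computation in \cite{hy19}.
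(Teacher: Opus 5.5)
The paper gives no proof of this statement; it imports it from \cite{hy19}. Your Steps 1 and 2 are sound. In fact the factor is exact, $g(\bar\mu,\bar\Sigma;\bar\mu,\bar\Sigma)=\frac{(n/2e)^{dn/2}}{\pi^{d/2}\Gamma_d(\frac{n-1}{2})}|\bar\Sigma|^{-(d+2)/2}$, so no loosening is needed. Step 3, however, cannot work, and the problem is structural. After Step 1 you conclude that everything else must come from $\log C_n$. Yet the remaining terms contain $\frac d2\log\|\hat\mu\|^2-\frac d2\sum_j\log\lambda^{(j)}(\hat\Sigma)$, which depend on ${\bm z}$. By contrast, $C_n=\int g(\bar\mu,\bar\Sigma;\bar\mu,\bar\Sigma)\,d\bar\mu\,d\bar\Sigma$ over a fixed region is just a number: $\bar\mu,\bar\Sigma$ are dummy variables, and there is nothing to ``rescale relative to $\hat\mu$ and $\hat\Sigma$.'' The $\log\log$ terms cannot come from that integral either. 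Over the shell $R_1\le\|\bar\mu\|^2\le R_2$ the radial density is $r^{d-1}$, not $r^{-1}$, and you simply get $\frac{\pi^{d/2}}{\Gamma(d/2+1)}(R_2^{d/2}-R_1^{d/2})$. The ranges $[R_1,R_2]$ and $[\lambda_1,\lambda_2]$ in the statement are ranges for the data-dependent quantities $\|\hat\mu\|^2$ and $\lambda^{(j)}(\hat\Sigma)$, not the parameter region of a single NML. That is the hint you missed.

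The missing idea is the two-stage (renormalized) NML of \cite{hy19}. First evaluate the complexity $C_n(R,\bm{\lambda})$ for the region $\{\|\bar\mu\|^2\le R,\ \lambda^{(j)}(\bar\Sigma)\ge\lambda_j\}$. The ball volume $\pi^{d/2}R^{d/2}/\Gamma(\frac d2+1)$ (its $\pi^{d/2}$ cancels the one in $g$) and the eigenvalue factor $\prod_j\int_{\lambda_j}^\infty s^{-d/2-1}ds=(2/d)^d\prod_j\lambda_j^{-d/2}$ produce every term from $\frac{dn}{2}\log\frac{n}{2e}$ through $-\frac d2\sum_j\log\lambda^{(j)}(\hat\Sigma)$, once you plug in the data-dependent hyperparameters $R=\|\hat\mu\|^2$ and $\lambda_j=\lambda^{(j)}(\hat\Sigma)$. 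The function $\max_\theta p({\bm z};\theta)/C_n(\hat R,\hat{\bm{\lambda}})$ is then no longer normalized, so you normalize it a second time over $R\in[R_1,R_2]$ and $\lambda_j\in[\lambda_1,\lambda_2]$. Under $g\,d\bar\mu\,d\bar\Sigma$, the statistic $\|\bar\mu\|^2$ carries mass $\propto R^{d/2-1}dR$ and each eigenvalue carries mass $\propto\lambda_j^{-d/2-1}d\lambda_j$. Dividing by $C_n(R,\bm{\lambda})$ leaves exactly $\frac d2\frac{dR}{R}\prod_j\frac d2\frac{d\lambda_j}{\lambda_j}$. Its integral, $\frac d2\log\frac{R_2}{R_1}\bigl(\frac d2\log\frac{\lambda_2}{\lambda_1}\bigr)^d$, is the source of $(d+1)\log\frac d2+\log\log\frac{R_2}{R_1}+d\log\log\frac{\lambda_2}{\lambda_1}$.

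Separately, your Jacobian route will not reach these constants. The bound $|\lambda_i-\lambda_j|\le\max(\lambda_i,\lambda_j)$ leaves $\prod_i\lambda_{(i)}^{d/2-1-i}$ (eigenvalues in decreasing order), not $\prod_j\lambda_j^{-1}$, plus an orthogonal-group volume that does not appear in the target. The factor $(2/d)^d\prod_j\lambda_j^{-d/2}$ is what one gets by integrating the $d$ eigenvalues as free coordinates. If you keep the Vandermonde Jacobian, $\int_{\lambda^{(j)}(\bar\Sigma)\ge\lambda_j}|\bar\Sigma|^{-(d+2)/2}d\bar\Sigma$ actually diverges for $d\ge2$: in the top eigenvalue $s$ the integrand behaves like $s^{(d-4)/2}$, which is not integrable at infinity. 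Reproducing the stated formula therefore means adopting the evaluation in \cite{hy19}, not carrying out an exact Jacobian computation.
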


We adopt the upper bound (\ref{gmm-nml}) as the NML code-length for the multivariate Gaussian distribution.
Note that the parametric complexity part does not depend on the data.

\section{Proof of Theorem \ref{thm-novelty}}\label{proofth1}
\begin{proof}
Let $n_{i}=|{\bm z}_{i}|$ and $n'=|{\bm z}'|$.
Suppose that the novelty condition (\ref{noveltycond}) is fulfilled, i.e,
for each $i$,
\begin{equation}\label{i}
{\mathcal L}_{_{\rm NML}}({\bm z}_{i}\oplus {\bm z}')-
\{{\mathcal L}_{_{\rm NML}}({\bm z}_{i} )+{\mathcal L}_{_{\rm NML}}( {\bm z}' )\}> (n_{i}+n')\epsilon _{1}.
\end{equation}
Then the error probability that for some $i$, ${\bm z}$ and ${\bm z}_{i}$ are identically distributed while (\ref{noveltycond}) is fulfilled is given as follows:
\begin{eqnarray}\label{ii}
& &{\rm Prob}[\exists i,\ {\bm z}_{i}\ {\rm and}\ {\bm z}'\ {\rm i.i.d.}\ {\rm while}\  (\ref{i})\ {\rm holds}] \nonumber \\
& &={\rm Prob}[\cup _{i=1}^{k}\{
 {\bm z}_{i}\ {\rm and}\ {\bm z}'\ {\rm i.i.d.}\ {\rm while}\  (\ref{i})\ {\rm holds}\}] \nonumber \\
& &\leq \sum _{i=1}^{k}{\rm Prob}[
 {\bm z}_{i}\ {\rm and}\ {\bm z}'\ {\rm i.i.d.}\ {\rm while}\  (\ref{i})\ {\rm holds}].
\end{eqnarray}
Under (\ref{i}), supposing that ${\bm z}\oplus {\bm z}'$ is identically distributed according to $p_{\theta _{i}}$,
\begin{eqnarray*}\label{m}
& &-\log p_{\theta_{i}}({\bm z}_{i}\oplus{\bm z}')+\log C_{n_{i}+n'}\nonumber \\
& &\geq \min _{\theta}\{-\log p_{\theta}({\bm z}_{i}\oplus{\bm z}')\}+\log C_{n_{i}+n'}\nonumber \\
& &= {\mathcal L}_{_{\rm NML}}({\bm z}_{i}\oplus {\bm z}')\\
& &\geq \{ {\mathcal L}_{_{\rm NML}}({\bm z}_{i} )+{\mathcal L}_{_{\rm NML}}( {\bm z}' )\}+(n_{i}+n')\epsilon _{1}.
\end{eqnarray*}
Therefore, we have
\begin{equation*}
 p_{\theta_{i}}({\bm z}_{i}\oplus {\bm z}')
\leq e^{-{\mathcal L}_{_{\rm NML}}({\bm z}_{i})}e^{-{\mathcal L}_{_{\rm NML}}({\bm z}')}
e^{-(n_{i}+n')\epsilon _{1}+\log C_{n_{i}+n'}}. \nonumber
\end{equation*}

Combining this with (\ref{ii}) gives
\begin{align}\label{mmm}
& {\rm Prob}[{\bm z}_{i}\ {\rm and}\ {\bm z}'\ {\rm i.i.d.}\ \ {\rm while}\  (\ref{i})\ {\rm holds}]\nonumber \\
& = \iint^{*} d{\bm z}_{i}d{\bm z}'p_{\theta _{i}}({\bm z_{i}}\oplus {\bm z}')\nonumber \\
 & \leq \iint ^{*}
  e^{-{\mathcal L}_{_{\rm NML}}({\bm z}_{i})}e^{-{\mathcal L}_{_{\rm NML}}({\bm z}')}
  e^{-(n_{i}+n')\epsilon _{1}+\log C_{n_{i}+n'}}\nonumber \\
& \leq \int d{{\bm z}_{i}} e^{-{\mathcal L}_{_{\rm NML}}({\bm z}_{i})}\int d{{\bm z}'}e^{-{\mathcal L}_{_{\rm NML}}({\bm z}')}e^{-(n_{i}+n')\epsilon _{1}+\log C_{n_{i}+n'}}\nonumber \\
& =\exp(-(n_{i}+n')\epsilon _{1}+\log C_{n_{i}+n'}),
\end{align}
where the integral $\iint ^{*}$ is taken over the domain such that ${\bm z}_{i}\oplus {\bm z}'$ satisfying (\ref{i}).

Note that the NML code-length is a prefix code-length, and hence the Kraft inequality holds; $\int d{{\bm z}}e^{-{\mathcal L}_{_{\rm NML}}({\bm z})}\leq 1$. This holds even if the parametric complexity (\ref{nml}) is replaced with its upper-bound.
If the penalty-term is less than the parametric complexity, the Kraft inequality does not hold. Hence the usage of the NML code-length is essential in this proof.

 Plugging (\ref{mmm}) into (\ref{ii}) yields
 \begin{align*}
& {\rm Prob}[\exists i,\ {\bm z}_{i}\ {\rm and}\ {\bm z}'\ {\rm i.i.d.}\ {\rm while}\  (\ref{i})\ {\rm holds} ]\\
&\leq \sum _{i=1}^{k
}\exp(-(n_{i}+n')\epsilon _{1}+\log C_{n_{i}+n'}).
 \end{align*}
 This completes the proof.
 \end{proof}

\section{Proof of Theorem \ref{thm-reliability}}\label{proofth2}
\begin{proof}
Suppose that $({\bm z}\oplus{\bm z'}, {\bm w}\oplus {\bm w}')$ is distributed according to the concatenated distribution $p_{\theta}({\bm z}, {\bm w})p_{\theta'}({\bm z}', {\bm w}')$ where $\theta \neq \theta'$.
Let $\bar{C_{n}}=\sup _{{\bm w}}C_{n}$ be an upper bound on the parametric complexity for the FMM where the supremum is achieved by $n_{i}=n/k$ for each $i$.
When (\ref{reliabilitycond}) is fulfilled,  the following inequality holds:
\begin{align*}
&{\mathcal L}_{_{\rm DNML}}({\bm z}\oplus {\bm z}', {\bm w}\oplus {\bm w}')\\
&\leq {\mathcal L}_{_{\rm DNML}}({\bm z}, {\bm w})+
{\mathcal L}_{_{\rm DNML}}({\bm z}', {\bm w}')+2(n+n')\epsilon _{2}\\
& =\min _{\theta}(-\log p_{\theta}({\bm z}, {\bm w}))+\log \bar{C}_{n}+\min _{\theta'}(-\log p_{\theta'}({\bm z}', {\bm w}'))+\log C_{n'}+2(n+n')\epsilon _{2}\\
& \leq -\log p_{\theta}({\bm z}, {\bm w})
p_{\theta'}({\bm z}', {\bm w}')+\log \bar{C}_{n}C_{n'}+2(n+n')\epsilon _{2}.
\end{align*}
This yields
\begin{eqnarray}\label{mu}
1&\leq& \left(\frac{e^{-{\mathcal L}_{\rm DNML}({\bm z}\oplus {\bm z}', {\bm w}\oplus {\bm w}')}}{p_{\theta}({\bm z}, {\bm w})
p_{\theta'}({\bm z}', {\bm w}')}\right)^{\frac{1}{2}}\times \exp \left( \frac{1}{2}\log \bar{C}_{n}C_{n'}+(n+n')\epsilon _{2}\right).
\end{eqnarray}

For $\theta \neq \theta'$, we employ  (\ref{mu}) to obtain
\begin{align*}
& {\rm Prob}[\ ({\bm z},{\bm w})\sim p_{\theta}\ {\rm and}\ ({\bm z}', {\bm w}')\sim p_{\theta'}\  {\rm while\ (\ref{reliabilitycond})\ holds}]\nonumber \\
& =\iiiint^{*} d{\bm z}d{\bm w}d{\bm z}'d
{\bm w}'
p_{\theta}({\bm z}, {\bm w})p_{\theta'}({\bm z}', {\bm w}')\nonumber \\
& \leq \iiiint^{*} d{\bm z}d{\bm w}d{\bm z}'d
{\bm w}'
p_{\theta}({\bm z}, {\bm w})p_{\theta'}({\bm z}', {\bm w}')\\
&\ \ \ \ \ \times \left(\frac{e^{-{\mathcal L}_{\rm DNML}}({\bm z}\oplus {\bm z}', {\bm w}\oplus {\bm w}')}{p_{\theta}({\bm z}, {\bm w})
p_{\theta'}({\bm z}', {\bm w}')}\right)^{\frac{1}{2}} e^{ \frac{1}{2}\log \bar{C}_{n}C_{n'}+(n+n')\epsilon _{2}}\nonumber \\
& \leq \iiiint^{*} d{\bm z}d{\bm w}d{\bm z}'d
{\bm w}'
\left(e^{-{\mathcal L}_{\rm DNML}({\bm z}\oplus {\bm z}', {\bm w}\oplus {\bm w}')}p_{\theta}({\bm z},{\bm w})p_{\theta'}({\bm z}', {\bm w}')\right)^{\frac{1}{2}}\times e^{ \frac{1}{2}\log \bar{C}_{n}C_{n'}+(n+n')\epsilon _{2}}\\
& =\exp \left[-2(n+n')d(p_{_{\rm DNML}}, p_{\theta}\otimes p_{\theta' })+\frac{1}{2}\log \bar{C}_{n}C_{n'}+(n+n')\epsilon _{2}\right],
\end{align*}
where the integral $\iiiint ^{*}$ is taken over the domain such that
$({\bm z},{\bm w}) ({\bm z}',{\bm w}')$ satisfying (\ref{reliabilitycond}).
We used the property that $e^{-{\mathcal L}_{_{\rm DNML}}({\bm z}\oplus {\bm z}', {\bm w}\oplus {\bm w}')}$ forms a probability distribution over ${\mathcal Z}^{n+n'}\times {\mathcal W}^{n+n'}$, because the  DNML code-length is the prefix code-length, and hence satisfies the Kraft inequality with equality. The bound does not hold for any other loss functions with less penalty terms than the DNML.
This completes the proof.
\end{proof}

\section{Comprehensive Results and Scatter Plot Analysis for Novelty Evaluation}\label{full_results}
This section supplements the results presented in Section~\ref{real_data}, which discussed model controllability on the real-world datasets: Amazon Computers (AC) and Coauthor Physics (CP).
The comprehensive results are provided in Table~\ref{full_table}. As observed in the experiments with the SBM datasets, the downstream indices reflect the novelty scores across all metrics. However, MDL and LL may appear to underperform compared to KL. Below, we clarify that this apparent underperformance does not imply that MDL and LL are fundamentally inferior.

\begin{table}[bht]
\centering
\caption{Comprehensive evaluation report (MDL vs LL vs KL).}
\label{full_table}
\vspace{0.2cm}
\small
\setlength{\tabcolsep}{2.5pt}
\resizebox{\linewidth}{!}{
\begin{tabular}{ll ccc @{\hspace{1em}} ccc @{\hspace{1em}} ccc}
\toprule
& & \multicolumn{3}{c}{\textbf{MDL Score}} & \multicolumn{3}{c}{\textbf{LL Score}} & \multicolumn{3}{c}{\textbf{KL Score}} \\
\cmidrule(lr){3-5} \cmidrule(lr){6-8} \cmidrule(lr){9-11}

\multicolumn{11}{c}{\textbf{Novelty Metrics}} \\
\midrule
\textbf{Data} & \textbf{Top} & $\epsilon_1$ & BAS$\uparrow$ & NLL$\uparrow$ & $\epsilon_1$ & BAS$\uparrow$ & NLL$\uparrow$ & $\epsilon_1$ & BAS$\uparrow$ & NLL$\uparrow$ \\
\midrule

\multirow{3}{*}{AC}
& 25\% & 0.33 & 0.33 $\pm$ 0.22 & 6.84 $\pm$ 0.74 & 0.33 & 0.33 $\pm$ 0.22 & 6.84 $\pm$ 0.74 & $7.6{\times}10^{-4}$ & 0.39 $\pm$ 0.22 & 6.67 $\pm$ 0.77 \\
& 50\% & 0.27 & 0.33 $\pm$ 0.22 & 6.49 $\pm$ 0.78 & 0.27 & 0.33 $\pm$ 0.22 & 6.49 $\pm$ 0.78 & $6.0{\times}10^{-4}$ & 0.35 $\pm$ 0.22 & 6.45 $\pm$ 0.80 \\
& 100\% & 0.09 & 0.32 $\pm$ 0.21 & 5.91 $\pm$ 0.98 & 0.09 & 0.32 $\pm$ 0.21 & 5.91 $\pm$ 0.98 & $1.4{\times}10^{-4}$ & 0.32 $\pm$ 0.21 & 5.91 $\pm$ 0.98 \\
\midrule

\multirow{3}{*}{CP}
& 25\% & 0.28 & 0.15 $\pm$ 0.15 & 4.14 $\pm$ 0.54 & 0.28 & 0.15 $\pm$ 0.15 & 4.14 $\pm$ 0.54 & $1.2{\times}10^{-4}$ & 0.20 $\pm$ 0.15 & 4.44 $\pm$ 0.45 \\
& 50\% & 0.19 & 0.17 $\pm$ 0.15 & 4.14 $\pm$ 0.50 & 0.19 & 0.17 $\pm$ 0.15 & 4.14 $\pm$ 0.50 & $7.1{\times}10^{-5}$ & 0.19 $\pm$ 0.14 & 4.19 $\pm$ 0.46 \\
& 100\% & $5.2{\times}10^{-4}$ & 0.13 $\pm$ 0.13 & 3.58 $\pm$ 0.81 & $1.5{\times}10^{-3}$ & 0.13 $\pm$ 0.13 & 3.58 $\pm$ 0.81 & $8.5{\times}10^{-7}$ & 0.13 $\pm$ 0.13 & 3.58 $\pm$ 0.81 \\
\midrule\midrule

\multicolumn{11}{c}{\textbf{Reliability Metrics}} \\
\midrule
\textbf{Data} & \textbf{Top} & $\epsilon_2$ & CD$\uparrow$ & \makecell{{MOD}$\downarrow$ \\ {\footnotesize($\times 10^{-3}$)}} & $\epsilon_2$ & CD$\uparrow$ & \makecell{{MOD}$\downarrow$ \\ {\footnotesize($\times 10^{-3}$)}} & $\epsilon_2$ & CD$\uparrow$ & \makecell{{MOD}$\downarrow$ \\ {\footnotesize($\times 10^{-3}$)}} \\
\midrule

\multirow{3}{*}{AC}
& 25\% & 0.42 & \textbf{0.82} $\pm$ 0.21 & \textbf{2.3} $\pm$ 4.6 & 0.51 & 0.46 $\pm$ 0.26 & 17.0 $\pm$ 22.1 & $1.1{\times}10^{-3}$ & 0.46 $\pm$ 0.29 & 17.3 $\pm$ 21.0 \\
& 50\% & 0.69 & 0.67 $\pm$ 0.28 & 4.8 $\pm$ 7.2 & 0.59 & 0.48 $\pm$ 0.29 & 16.2 $\pm$ 26.6 & $1.3{\times}10^{-3}$ & 0.47 $\pm$ 0.31 & 18.0 $\pm$ 25.1 \\
& 100\% & 1.78 & 0.44 $\pm$ 0.32 & 20.6 $\pm$ 28.8 & 1.23 & 0.44 $\pm$ 0.32 & 20.6 $\pm$ 28.8 & $2.2{\times}10^{-3}$ & 0.44 $\pm$ 0.32 & 20.6 $\pm$ 28.8 \\
\midrule

\multirow{3}{*}{CP}
& 25\% & 0.73 & \textbf{0.74} $\pm$ 0.13 & \textbf{7.3} $\pm$ 9.4 & 0.38 & \textbf{0.75} $\pm$ 0.11 & 8.9 $\pm$ 11.0 & $2.5{\times}10^{-4}$ & \textbf{0.73} $\pm$ 0.10 & 26.2 $\pm$ 30.0 \\
& 50\% & 0.89 & 0.66 $\pm$ 0.16 & 7.4 $\pm$ 9.1 & 0.49 & 0.69 $\pm$ 0.14 & 13.2 $\pm$ 20.0 & $3.1{\times}10^{-4}$ & 0.68 $\pm$ 0.14 & 16.8 $\pm$ 24.0 \\
& 100\% & 1.75 & 0.61 $\pm$ 0.17 & 15.4 $\pm$ 19.8 & 1.03 & 0.61 $\pm$ 0.17 & 15.4 $\pm$ 19.8 & $7.1{\times}10^{-4}$ & 0.61 $\pm$ 0.17 & 15.4 $\pm$ 19.8 \\
\bottomrule

\end{tabular}
}
\end{table}

Focusing on the Amazon Computers (AC) dataset, Figure~\ref{scatter_amazon} reveals a shift in behavior compared to the earlier SBM experiments. For the BAS metric, while KL marginally separates values along the vertical Novelty axis, MDL homogenizes them vertically. Instead, MDL predominantly separates the values along the horizontal reliability axis. Notably, this horizontal separation is highly distinct, delineating BAS even more clearly than KL does vertically. This suggests that in complex graphs, a lower global explanatory power inherently captures a form of novelty. Consequently, MDL's apparent underperformance in Table~\ref{full_table} is merely an artifact of this shifted mechanism, rather than a strict inferiority to KL.

\begin{figure}[hb]
\centering
  \vskip -0.1in
  \begin{center}
    \centerline{\includegraphics[width=0.95\textwidth]
    {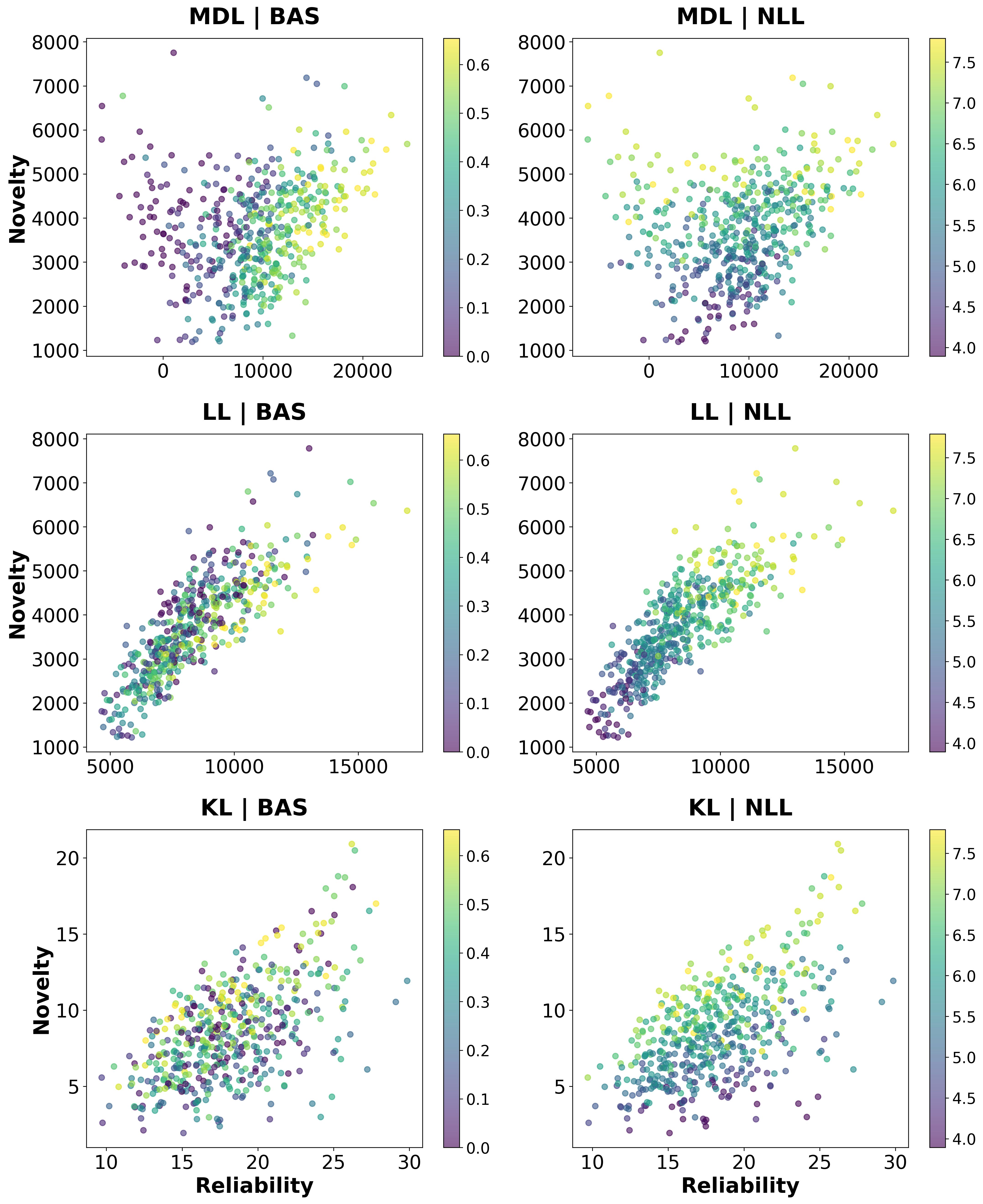}
   }
    \caption{Novelty vs Reliability on Amazon Computers (K=10). Scatter plots for the 8D latent space with EM-based structural labels. The y-axis and x-axis denote Novelty and Reliability, respectively. Panels compare statistical scorers (MDL, LL, KL) against validation metrics: BAS (left column) and NLL (right column).
        }
    \label{scatter_amazon}
  \end{center}
    \vskip -0.3in
\end{figure}

Turning to the Coauthor Physics (CP) dataset, Figure~\ref{scatter_cophys} illustrates a related phenomenon regarding the apparent underperformance of MDL observed in Table~\ref{full_table}. In complex real-world data, the relationship between the latent axes and the downstream indices can form highly non-linear boundaries. As seen in the figure, while the BAS values remain somewhat entangled in the distribution of LL, MDL segregates them across a distinct, non-linear boundary. This demonstrates that MDL achieves a significantly higher separation capability compared to both LL and KL. Consequently, the lower scores of MDL in Table~\ref{full_table} do not imply a lack of expressiveness. Rather, they suggest that standard evaluation metrics may not fully capture such complex, non-linear separation.

\begin{figure*}[hb]
  \vskip -0.1in
  \begin{center}
    \centerline{\includegraphics[width=0.95\textwidth]
    {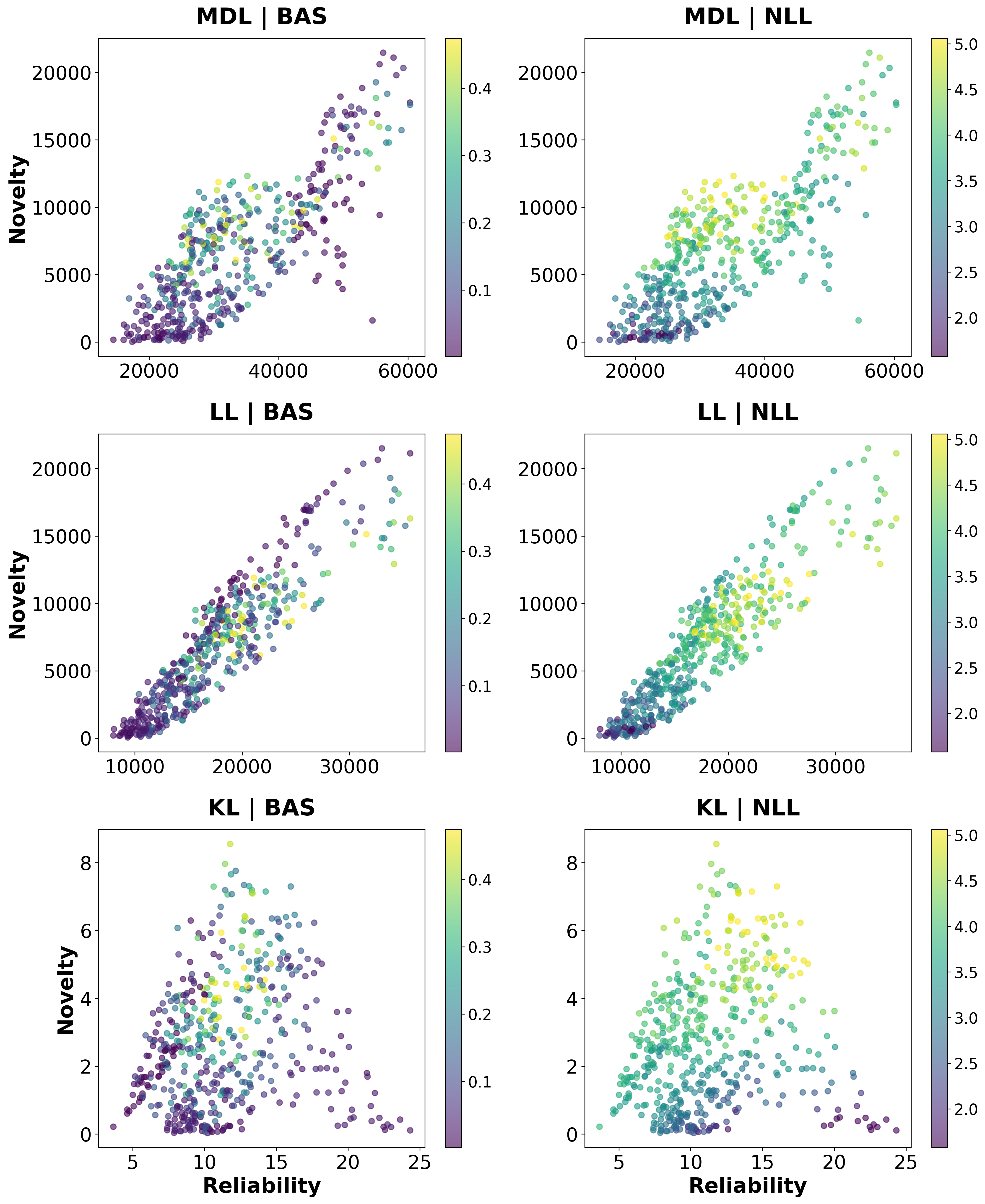}
   }
    \caption{Novelty vs. Reliability on Coauthor Physics (K=5). Similar to Figure~\ref{scatter_amazon}, but the scatter plots are visualized using the provided ground-truth class labels instead of EM-based structural labels.
        }
    \label{scatter_cophys}
  \end{center}
\end{figure*}

\section{Experimental Setup and Implementation Details}\label{exp_setup}

\subsection{Hardware and Computing Environments}
For reproducibility, all experiments throughout this paper, including those on synthetic datasets, were conducted locally on an Apple M1 MacBook Pro with 16GB of RAM. The code is written in Python using PyTorch and PyTorch Geometric.

\subsection{Datasets and Preprocessing}
In our experiments, we evaluated our graph outpainting framework on three datasets. The edge splits for link prediction were performed to validate the generation of expanded communities. Table~\ref{tab:datasets} summarizes the dataset statistics.
\begin{table}[h]
\centering
\caption{Summary of dataset statistics and edge split configurations.}
\label{tab:datasets}
\small
\begin{tabular}{lrrrr p{5.5cm}}
\toprule
\textbf{Dataset} & \textbf{Nodes} & \textbf{Edges} & \textbf{Features} & \textbf{Clusters} & \textbf{Cluster Label Source / Edge Split Method} \\
\midrule
SBM & $210$ & $\sim 2,604$ & $7$ &  $7$& Ground truth / None (Full Graph) \\
CP & $34,493$ & $247,962$ & $8,415$ & $5$ & Ground Truth (Research Field) / \newline \texttt{train\_test\_split\_edges} (90/10) \\
AC & $13,752$ & $245,861$ & $767$ & $10$ & EM Clustering on Latent Space / \newline \texttt{RandomLinkSplit} (90/10) \\
\bottomrule
\end{tabular}
\end{table}
\paragraph{Cluster Labeling for Amazon Computers:}
Since product categories do not necessarily reflect structural communities on the graph, we performed Expectation-Maximization (EM) clustering using a Gaussian Mixture Model ($K=10$) on the latent representations obtained from the Graph Autoencoder (GAE). We utilized the resulting hard-assignment labels to ensure the labels accurately reflect the structural properties of the expanded communities.

\subsection{Graph Autoencoder (GAE) Architecture}
Across all experiments, we adopted a two-layer Graph Convolutional Network (GCN) for the encoder and an Inner Product decoder. Table~\ref{tab:hyperparameters} details the hyperparameters used for each dataset.

\begin{table}[h]
\centering
\caption{GAE architecture and training hyperparameters.}
\label{tab:hyperparameters}
\small
\renewcommand{\arraystretch}{1.1}
\begin{tabular}{lccc}
\toprule
\textbf{Parameter} & \textbf{SBM} & \textbf{\shortstack{CP}} & \textbf{\shortstack{AC}} \\
\midrule
Input Dimension & $210$ & $8,415$ & $767$ \\
Hidden Dimension & $32$ & $128$ & $128$ \\
Latent Dimension ($d$) & $6$ & $8$ & $8$ \\
Decoder Type & Inner Product & Inner Product & Inner Product \\
Optimizer & Adam & Adam & Adam \\
Learning Rate & $0.01$ & $0.01$ & $0.001$ \\
Epochs & $200$ & $300$ & $300$ \\
\bottomrule
\end{tabular}
\end{table}

\subsection{Novelty Generation Process (Convex Combination)}
After estimating the distribution parameters for each cluster $k \in \{1, \dots, K\}$, we sample random Dirichlet weights $q \sim \text{Dir}(\mathbf{1}_K)$ and construct the parameters of the novel distribution via convex combination (500 iterations).

\subsubsection{GMM Configuration (SBM Only)}
For the SBM dataset under the GMM assumption, the parameters are formulated as:
\begin{align}
    \mu_{\text{new}} &= \sum_{k} q_k \mu_k + \epsilon \\
    \Sigma_{\text{new}} &= \sum_{k} q_k \Sigma_k
\end{align}

\subsubsection{vMF-Radial Configuration (All Datasets)}
To generate a new set of nodes (size $N_{\text{new}} = \sum q_k N_k$), the vMF-Radial parameters are constructed as follows:
\begin{itemize}
    \item \textbf{Direction ($\mu_{\text{dir}}$):} We compute the base direction $\mu_{\text{dir, base}} = \sum_{k} q_k \mu_{\text{dir}, k}$, add a small Gaussian noise $\epsilon \sim \mathcal{N}(0, \sigma_{\text{dir}}^2 I)$, and apply L2 normalization.
    \item \textbf{Concentration ($\kappa$):} $\kappa_{\text{new}} = \sum_{k} q_k \kappa_k$.
    \item \textbf{Radial Mean ($\mu_r$):} $\mu_{r, \text{new}} = \sum_{k} q_k \mu_{r, k}$. For Amazon Computers, an additional Gaussian noise $\mathcal{N}(0, \sigma_{\text{mean}}^2)$ is injected.
    \item \textbf{Radial Variance ($\sigma_r^2$):} $\sigma_{r, \text{new}} = \sqrt{\sum_{k} q_k \sigma_{r, k}^2}$.
\end{itemize}
When mapping the vMF-Gaussian mixtures back to Euclidean space for probability density evaluations, we apply a strict Jacobian correction following a $1/r^{d-1}$ penalty to ensure theoretical correctness in the density estimation.

Table~\ref{tab:noise} specifies the standard deviations of the additive noise.

\begin{table}[h]
\centering
\caption{Noise hyperparameters for the generative processes.}
\label{tab:noise}
\begin{tabular}{lccc}
\toprule
\textbf{Noise Parameter} & \textbf{SBM} & \textbf{CP} & \textbf{AC} \\
\midrule
Direction Noise ($\sigma_{\text{dir}}$) & $0.005$ & $0.005$ & $0.5$ \\
Radial Mean Noise ($\sigma_{\text{mean}}$) & - & - & $0.5$ \\
\bottomrule
\end{tabular}
\end{table}

\subsection{Graph Decoding and Thresholding}
To reconstruct the edges from the generated expanded communities, we calculate the inner product between the generated latent variables $W$ and the original latent variables $Z$. The threshold is adaptively determined to maintain the target density of the original graph:
\begin{itemize}
    \item \textbf{SBM \& Coauthor Physics:} Probability-based. The inner product scores are passed through a sigmoid function $\sigma(\cdot)$, and edges are formed where the score exceeds the threshold $\tau_{\text{prob}}$.
    \item \textbf{Amazon Computers:} Logit-based. The raw inner product scores are directly used, and edges are formed where the score exceeds $\tau_{\text{logit}}$.
\end{itemize}

\end{document}